\DeclareMathAlphabet{\mathbfit}{OML}{cmm}{b}{it}
\long\def\dl{|\hspace{-.03cm}|}
\long\def\db#1{[\![#1]\!]}
\newtheorem{thm}{Theorem}
\newtheorem{cor}[thm]{Corollary}
\newtheorem{lem}[thm]{Lemma}
\newtheorem{prop}[thm]{Proposition}
\theoremstyle{definition}
\newtheorem{defn}{Definition}
\theoremstyle{remark}
\newtheorem{rem}{Remark}
\theoremstyle{definition}
\newtheorem{exm}{Example}
\title{Towards Universal Languages for Tractable Ontology Mediated Query Answering}
\author{Heng Zhang,\textsuperscript{\rm 1} Yan Zhang,\textsuperscript{\rm 2,3} Jia-Huai You,\textsuperscript{\rm 4} Zhiyong Feng,\textsuperscript{\rm 1} Guifei Jiang\,\textsuperscript{\rm 5}\\
\\
\textsuperscript{\rm 1}College of Intelligence and Computing, Tianjin University, Tianjin, China\\
\textsuperscript{\rm 2}School of Computing, Engineering and Mathematics, Western Sydney University, Penrith, Australia\\
\textsuperscript{\rm 3}School of Computer Science and Technology, Huazhong University of Technology and Science, Wuhan, China\\
\textsuperscript{\rm 4}Department of Computing Science, University of Alberta, Edmonton, Canada\\
\textsuperscript{\rm 5}College of Software, Nankai University, Tianjin, China\\
heng.zhang@tju.edu.cn, yan.zhang@westernsydney.edu.au, jyou@ualberta.ca, zyfeng@tju.edu.cn, g.jiang@nankai.edu.cn
}
\begin{document}

\maketitle

\begin{abstract}
An ontology language for ontology mediated query answering (OMQA-language) is universal for a family of OMQA-languages if it is the most expressive one among this family. In this paper, we focus on three families of tractable OMQA-languages, including first-order rewritable languages and languages whose data complexity of the query answering is in {AC}$^0$ or PTIME. On the negative side, we prove that there is, in general, no universal language for each of these families of languages. On the positive side, we propose a novel property, the locality, to approximate the first-order rewritability, and show that there exists a language of disjunctive embedded dependencies that is universal for the family of OMQA-languages with locality. All of these results apply to OMQA with query languages such as conjunctive queries, unions of conjunctive queries and acyclic conjunctive queries.
\end{abstract}

\section{Introduction}

Ontology mediated query answering (OMQA) is a paradigm that generalizes the traditional database querying by enriching the database with a domain ontology~\cite{PoggiLCGLR08}. This paradigm has played an important role in the semantic web~\cite{calvanese:DL-lite2007,lutz:ijcai05}, data modelling~\cite{BerardiCG05}, data exchange~\cite{FKMP05} and data integration~\cite{Lenzerini02}, and has recently emerged as one of the central issues in knowledge representation as well as in databases. 

A long-term major topic for OMQA is to identify proper languages that specify ontologies. There have been a large number of ontology languages proposed for OMQA since the mid 2000s. For instance, in description logics, the DL-Lite family~\cite{calvanese:DL-lite2007}, $\mathcal{EL}$-family~\cite{lutz:ijcai05} and other variants have been proposed and extensively studied. More recently, the Datalog${\pm}$ family, a.k.a. existential rule languages, or dependencies in databases, have been rediscovered as promising languages for OMQA, see, e.g.,~\cite{BLMS11,CaliGL12,CaliGP12}. Most of these languages enjoy good computational properties such as the first-order rewritability or {PTIME} data complexity.

While all these languages are of their specific features and hence are useful in different applications, it is not realistic to implement OMQA-systems for all of them. So a natural question arises: Can we find the largest one (in the expressiveness) among the family of first-order rewritable (or PTIME-tractable) OMQA-languages? Let us call the largest language in the above sense a {\em universal language}. Clearly, it is of great theoretical and practical importance to identify the existence of universal language w.r.t. some kind of tractability, which is also the main task of this paper.

It is worth noting that the universality is one of the major principles for designing languages in both computer science and logic. For example, almost all the traditional programming languages, including C, Java and Prolog, are known to be universal for the family of Turing complete programming languages; propositional logic can express all boolean functions; and by the well-known Lindstr\"{o}m theorem first-order logic is the largest one among logics that enjoy both the compactness and the L\"{o}wenheim-Skolem property; see, e.g.,~\cite{EbbinghuasFT1994}. In databases, first-order language is shown to be universal for the family of query languages with data complexity in {AC}$^0$, and Datalog universal for the family of query languages with data complexity in PTIME; see, e.g.,~\cite{Immerman99}.

Some recent work in OMQA has been done along the line of identifying universal languages. \citeauthor{CalvaneseGLLR13}~\shortcite{CalvaneseGLLR13} proved that, under a certain syntactic classification, some languages in the DL-Lite family are the maximal fragments of description logic with the first-order rewritability. By regarding OMQA as traditional database querying, \citeauthor{GRS2014}~\shortcite{GRS2014} showed that weakly-guarded tuple-generating dependencies (TGDs) capture the class of EXPTIME queries; \citeauthor{RudolphT2015}~\shortcite{RudolphT2015} proved that general TGDs capture the class of recursively enumerable queries. In the setting of schema mapping, \citeauthor{ZhangZY15}~\shortcite{ZhangZY15} showed that the language of weakly-acyclic TGDs is universal for languages of TGDs with finite semi-oblivious chase. All of these results shed new insights on understanding the expressiveness of existential rules, but it is worth noting that OMQA is significantly different from both traditional database querying and schema mapping. To understand the expressiveness in the framework of OMQA, \citeauthor{ZhangZY16}~\shortcite{ZhangZY16} proved that the language of disjunctive embedded dependencies is universal for the family of recursively enumerable OMQA-languages. Along this line, this paper will focus on tractable OMQA.

Aimed at exploiting universal languages for the tractable OMQA, in this paper we focus on three families of OMQA-languages, including first-order rewritable languages and languages whose data complexity is in AC$^0$ or PTIME. Our contributions are summarized as follows. On one hand, we prove that there is, in general, no universal language for each of the above families of languages. On the other hand, by restricting the number of database constants involved in query answering, we propose a novel property, called the {\em locality}, to approximate the first-order rewritability, and identify the existence of universal language for the family of local OMQA-languages. All of these results hold for OMQA with query languages such as conjunctive queries, unions of conjunctive queries and acyclic conjunctive queries.

\section{Preliminaries}

{\noindent\bf Databases and Instances.} We use a countably infinite set 
of {\em constants} and a countably infinite set 
of {\em variables}, and assume they are disjoint. Every {\em term} is either a constant or a variable. A {\em relational schema} $\mathscr{R}$ consists of a set of {\em relation symbols}. Each relation symbol has an {\em arity} which is a natural number. An {\em atoms over $\mathscr{R}$} (or $\mathscr{R}$-atom) is either an equality, or a {\em relational atom} built upon terms and relation symbols in $\mathscr{R}$. A {\em fact} is a variable-free relational atom. Each {\em instance over $\mathscr{R}$} (or {\em $\mathscr{R}$-instance}) consists of a set of facts over $\mathscr{R}$. Instances that are finite are called {\em databases}. Suppose $I$ is an instance. Let $adom(I)$ denote the set of constants that occur in $I$. Let {DB}$[\mathscr{R}]$ denote the class of all databases over schema $\mathscr{R}$. Given a set $A$ of constants, by $I|_A$ we denote the subset of $I$ in which each fact involves only constants in $A$.

Let $I$ and $J$ be instances over a relational schema $\mathscr{R}$, and $C\subseteq adom(I)\cap adom(J)$. Then every {\em $C$-homomorphism} from $I$ to $J$ is a function $h:adom(I)\rightarrow adom(J)$ such that (i)~$R(\vec{a})\in I$ implies $R(h(\vec{a}))\in J$ for all relation symbols $R\in\mathscr{R}$ and all tuples $\vec{a}$ of constants, and (ii)~$h(c)=c$ for all $c\in C$. If such $h$ exists, we say that $I$ is {\em $C$-homomorphic} to $J$, and write $I\rightarrow_C J$; in addition, we write $I\twoheadrightarrow_C J$ if $h$ is injective. For simplicity, $C$ will be dropped if it is empty. 

\medskip
{\noindent\bf Queries.} Fix $\mathscr{R}$ as a relational schema. By a {\em query} over $\mathscr{R}$ (or {\em$\mathscr{R}$-query}) we mean a formula built upon atoms over $\mathscr{R}$ in some logic. The logic could be first-order logic, second-order logic, or other variants. A query is {\em boolean} if it has no free variables. For convenience, given any query $q$, let $const(q)$ denote the set of constants that occur in $q$.

Every first-order formula is called a {\em first-order query}. A {\em conjunctive query (CQ)} is a query of the form $\exists\vec{y}.\varphi(\vec{x},\vec{y})$ where $\varphi$ is a finite but nonempty conjunction of relational atoms. Let $q$ be a boolean CQ. We use $[q]$ to denote the database that consists of all the atoms that appear in $q$, where variables in atoms are regarded as special constants. The {\em Gaifman graph} of ${q}$ is an undirected graph with each term in ${q}$ as a vertex, and with each pair of distinct terms as an edge if they cooccur in some atom in ${q}$. A boolean CQ is called {\em acyclic} if its Gaifman graph is acyclic. A {\em union of conjunctive query (UCQ)} is a first-order formula built upon atoms by connectives $\wedge,\vee$ and quantifier $\exists$. Clearly, every UCQ is equivalent to a disjunction of CQs.

Every {\em Datalog$^\neg$ program} consists of a finite set of {\em rules} of the form $\forall\vec{x}\forall\vec{y}(\varphi(\vec{x},\vec{y})\rightarrow\alpha(\vec{x}))$, where $\alpha$ is a relational atom and $\varphi$ is a finite conjunction of atoms or negated atoms; $\alpha$ and $\varphi$ are called the {\em head} and the {\em body} of the rule, respectively. Each variable in $\vec{x}$ should have at least one positive occurrence in $\varphi$. A relation symbol is called {\em intentional} if it has at least one occurrence in the head of some rule, and {\em extensional} otherwise. No intensional relation symbol is allowed to appear in a negated atom. A {\em Datalog$^\neg$ query} is of the form $(\Pi,\textit{P})(\vec{x})$ where $\Pi$ is a Datalog$^\neg$ program, $\textit{P}$ an extensional relation symbol, and $\vec{x}$ a variable tuple of a proper length. It is well-known that every Datalog$^\neg$ query can be translated to an equivalent formula in least fixpoint logic, see, e.g.,~\cite{EbbinghuasF95}.

Only boolean queries will be used in this work.
For convenience, we employ {CQ}, {ACQ} and {UCQ} to denote the classes of boolean CQs, boolean acyclic CQs and boolean UCQs, respectively. Let {FO} denote the class of boolean first-order queries,  {FO}$(+,\times)$ denote the class of boolean first-order queries that involve two built-in arithmetic relations $+$ and $\times$, and {Datalog}$^\neg(\le)$ denote the class of boolean Datalog$^\neg$ queries that involve a built-in successor relation $\textit{Succ}$, and special constants $\textit{min}$ and $\textit{max}$, denoting the minimum and the maximum elements, respectively, under the underlying order. Given a class $\mathcal{C}$ of queries and a relational schema $\mathscr{R}$, let $\mathcal{C}[\mathscr{R}]$ denote the class of $\mathscr{R}$-queries that belong to $\mathcal{C}$. 

In the theory of descriptive complexity~\cite{Immerman99}, it was proved that {FO}$(+,\times)$ and {Datalog}$^\neg(\le)$ exactly capture complexity classes {AC}$^0$ and PTIME respectively, where {AC}$^0$ denotes the class of languages recognized by a uniform family of circuits with constant depth and polynomial size, and {PTIME} denotes the class of languages recognized by a deterministic Turing machine in polynomial time.

\medskip
{\noindent\bf Dependencies.} A {\em disjunctive embedded dependency (DED)} over a relational schema $\mathscr{R}$ is a sentence $\sigma$ of the form
\begin{equation*}
\forall\vec{x}\forall\vec{y}(\phi(\vec{x},\vec{y})\rightarrow\exists\vec{z}_1.\psi_1(\vec{x},\vec{z}_1)\vee\cdots\vee\exists\vec{z}_k.\psi_k(\vec{x},\vec{z}_k))
\end{equation*}
where $k\ge 0$, $\phi$ is a conjunction of relational $\mathscr{R}$-atoms involving terms from $\vec{x}\cup\vec{y}$ only, each $\psi_i$ is a conjunction of atoms over $\mathscr{R}$ involving terms from $\vec{x}\cup\vec{z}_i$ only, and each variable in $\vec{x}$ has at least one occurrence in $\phi$. In particular, $\sigma$ is called a {\em tuple-generating dependency (TGD)} if it is equality-free and $k=1$. For simplicity, we omit the universal quantifiers and the brackets appearing outside the atoms.

Let $D$ be a database, $\Sigma$ a set of (first-order) sentences, and ${q}$ a boolean query; all of them are over a common relational schema $\mathscr{R}$. We write $D\cup\Sigma\vDash{q}$ if, for all $\mathscr{R}$-instances $I$ with $D\subseteq I$, if $I\models\sigma$ for all sentences $\sigma\in\Sigma$ then $I\models{q}$, where the satisfaction relation $\models$ is defined in the standard way. 

\section{Ontologies and Languages in OMQA}

Before identifying the existence of universal languages for OMQA, we need some notions to clarify what an ontology in OMQA is, and what an ontology language in OMQA is. To make the presented results more applicable, we will define these notions in a language-independent way.

To define ontologies in OMQA, below we generalize the notion introduced in~\cite{ZhangZY16} from CQs to more general query languages such as UCQs.
\begin{defn}
Let $\mathscr{D}$ and $\mathscr{Q}$ be relational schemas, and $\mathcal{Q}$ a class of queries. A {\em quasi-OMQA[$\mathcal{Q}$]-ontology} over $(\mathscr{D},\mathscr{Q})$ is a set of ordered pairs $(D,{q})$, where $D$ is a $\mathscr{D}$-database and ${q}$ a boolean $\mathscr{Q}$-query in $\mathcal{Q}$ such that $\textit{const}(q)\subseteq{adom}(D)$.

Moreover, a quasi-OMQA[$\mathcal{Q}$]-ontology $O$ over $(\mathscr{D},\mathscr{Q})$ is called an {\em OMQA[$\mathcal{Q}$]-ontology} if all of the following hold: 
\begin{enumerate}
\item $O$ is {\em closed under query conjunctions}, i.e.,\\ $(D,\hspace{-.01cm}{q})\hspace{-.01cm}\in\hspace{-.01cm} O\,\&\,(D,\hspace{-.01cm}{p})\hspace{-.01cm}\in\hspace{-.01cm} O\,\&\,{q}\wedge{p}\hspace{-.01cm}\in\hspace{-.01cm}\mathcal{Q}\Longrightarrow(D,\hspace{-.01cm}{q}\wedge{p})\hspace{-.01cm}\in\hspace{-.01cm} O$.
\item $O$ is {\em closed under query implications}, i.e.,\\ $(D,{q})\in O\,\&\,{p}\in\mathcal{Q}\,\&\,{q}\vDash p\Longrightarrow(D,p)\in O$. 
\item $O$ is {\em closed\hspace{-.01cm} under\hspace{-.01cm} injective\hspace{-.01cm} database\hspace{-.01cm} homomorphisms},\hspace{-.01cm} i.e.,\!\\ $(D,{q})\in O\,\&\,D\twoheadrightarrow_{\textit{const(q)}} D'\Longrightarrow(D',{q})\in O$. 
\end{enumerate}
\end{defn}

Given any logical theory $\Sigma$, we can interpret it as a quasi-OMQA[$\mathcal{Q}$]-ontology over $(\mathscr{D},\mathscr{Q})$ as follows:
\begin{equation*}
[\![\Sigma]\!]_{\mathscr{D},\mathscr{Q}}^{\mathcal{Q}}=\{(D,{q}):D\!\in\!\text{DB}[\mathscr{D}]\,\&\,q\!\in\!\mathcal{Q}[\mathscr{Q}]\,\&\,D\cup\Sigma\vDash{q}\}.
\end{equation*}
It is easy to see that, for theories $\Sigma$ in almost all the classical logic, $[\![\Sigma]\!]_{\mathscr{D},\mathscr{Q}}^\mathcal{Q}$ is indeed an OMQA$[\mathcal{Q}]$-ontology.

With the notion of ontology, we are then able to present an abstract definition for ontology languages in OMQA. 
\begin{defn}
Let $V$ be a finite but nonempty set, $\mathscr{D}$ and $\mathscr{Q}$ relational schemas, and $\mathcal{Q}$ a class of queries. Then every {\em OMQA[$\mathcal{Q}$]-language} $\mathcal{L}$ over $(\mathscr{D},\mathscr{Q})$ (with vocabulary $V$) is defined as an ordered pair $(T, M)$ such that:
\begin{enumerate}
\item ${T}$ consists of a decidable set of {\em theories}, each of which is a finite string over $V$ (i.e., an element of $V^\ast$);
\item ${M}$ is a {\em semantic mapping}, i.e., a function that maps each theory in ${T}$ to an OMQA[$\mathcal{Q}$]-ontology over $(\mathscr{D},\mathscr{Q})$.
\end{enumerate}
\end{defn}

\begin{exm}\label{exm:ded}
Let $\mathscr{D}$ and $\mathscr{Q}$ be relational schemas, $\mathcal{Q}$ a class of queries, and $T$ a decidable class of finite sets of DEDs. Let $M$ be a function that maps each set $\Sigma\in T$ to $[\![\Sigma]\!]_{\mathscr{D},\mathscr{Q}}^{\mathcal{Q}}$.
It is easy to see that $\mathcal{L}=(T,M)$ is an OMQA[$\mathcal{Q}$]-language.  
\end{exm}

The language $\mathcal{L}$ defined above is called a {\em DED-language} over $(\mathscr{D},\mathscr{Q})$ (induced by $T$). In particular, if $T$ consists of all finite sets of DEDs, we call it the {\em full DED-language} over $(\mathscr{D},\mathscr{Q})$. Unfortunately, it had been proved in~\cite{Vardi1982} that query answering with the full DED-language is uncomputable. In this work, we thus focus on tractable OMQA-languages. We will consider two kinds of tractability: 

\begin{defn}\label{defn:tractability}
Let $\mathscr{D}$ and $\mathscr{Q}$ be relational schemas, $\mathcal{C}$ and $\mathcal{Q}$ classes of queries, and $\mathcal{K}$ a complexity class. An OMQA[$\mathcal{Q}$]-language $\mathcal{L}=({T},{M})$ over $(\mathscr{D},\mathscr{Q})$ is 
\begin{enumerate}
\item {\em $\mathcal{C}$-rewritable} if there is a computable function $rew$ that maps each ordered pair $(t,{q})\in T\times\mathcal{Q}[\mathscr{Q}]$ to a boolean query $\varphi_{t,{q}}\in\mathcal{C}[\mathscr{D}]$ such that $(D,{q})\in{M}(t)$ iff $D\models\varphi_{t,{q}}$; in this case, $rew$ is called a {\em $\mathcal{C}$-rewriting function} of $\mathcal{L}$.
\item {\em $\mathcal{K}$-compilable} if there is a computable function $com$ that maps each ordered pair $(t,{q})\in T\times\mathcal{Q}[\mathscr{Q}]$ to a Turing machine $\mathds{M}_{t,{q}}$, whose running time belongs to $\mathcal{K}$, such that $(D,{q})\in{M}(t)$ iff $\mathds{M}_{t,{q}}$ accepts on the input $D$; in this case, $com$ is called a {\em $\mathcal{K}$-compiler} of $\mathcal{L}$.
\end{enumerate}
\end{defn}

\begin{exm}
According to~\cite{CaliGL12}, the language of linear TGDs is both FO-rewritable and AC$^0$-compilable, and the language of guarded TGDs is both Datalog$^\neg(\le)$-rewritable and PTIME-compilable.
\end{exm}

\begin{rem}
Clearly, there is a nonuniform way to redefine notions in Definition~\ref{defn:tractability} by allowing rewriting functions and compilers to be uncomputable. However, it is worth noting that languages defined in such a way could be intractable. In fact, there is a nonuniform FO-rewritable OMQA-language in which the query answering is highly undecidable.  
\end{rem}

Next we give the definition of universal OMQA-language. 

\begin{defn}
Let $\mathcal{Q}$ be a class of queries, $\mathscr{D}$ and $\mathscr{Q}$ relational schemas, and $\mathcal{L}=({T},{M})$ and $\mathcal{L}'=({T}',{M}')$ OMQA[$\mathcal{Q}$]-languages over $(\mathscr{D},\mathscr{Q})$. Then we say that $\mathcal{L}'$ is {\em at least as expressive as} $\mathcal{L}$, written $\mathcal{L}\le\mathcal{L}'$, if for each theory $t\in T$ there is a theory $t'\in T'$ such that $M(t)=M'(t)$; and $\mathcal{L}'$ {\em has the same expressiveness as} $\mathcal{L}$ if both $\mathcal{L}\le\mathcal{L}'$ and $\mathcal{L}'\le\mathcal{L}$.

An OMQA[$\mathcal{Q}$]-language $\mathcal{L}$ is called {\em universal} for a family $\mathscr{L}$ of  OMQA[$\mathcal{Q}$]-languages over $(\mathscr{D},\mathscr{Q})$ if (i) $\mathcal{L}\in\mathscr{L}$, and (ii) for all languages $\mathcal{L}'\in\mathscr{L}$, we have that $\mathcal{L}'\le\mathcal{L}$. 
\end{defn}

\section{Nonexistence for the General Case}

One ambitious goal in OMQA is to find some universal language for the tractable OMQA. Unfortunately, the following theorem shows that this goal is in general unachievable.

\begin{thm}\label{thm:no_forewritable_univ_lang}
Let $\mathscr{D}$ and $\mathscr{Q}$ be relational schemas such that $\mathscr{Q}$ contains a relation symbol of arity $\ge 2$, and suppose $\mathcal{C}\in\{\textit{FO},\textit{FO}\,(+,\times),\textit{Datalog}^\neg(\le)\}$ and ACQ $\subseteq$ $\mathcal{Q}\subseteq\text{UCQ}$. Then there is no universal language for the family of $\mathcal{C}$-rewritable OMQA[$\mathcal{Q}$]-languages over $(\mathscr{D},\mathscr{Q})$.
\end{thm}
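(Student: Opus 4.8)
The plan is to argue by contradiction via diagonalization against the single computable rewriting function that a universal language would have to possess. Suppose $\mathcal{L}=(T,M)$ is universal for the family of $\mathcal{C}$-rewritable OMQA$[\mathcal{Q}]$-languages over $(\mathscr{D},\mathscr{Q})$. Since $\mathcal{L}$ belongs to this family it is itself $\mathcal{C}$-rewritable, so it comes with a computable rewriting function $rew$ and a decidable theory set $T$; fix a computable enumeration $T=\{t_0,t_1,\dots\}$. The first step is to record that membership in any $M(t)$ is decidable: by definition $(D,q)\in M(t)$ iff $D\models rew(t,q)$, and for each $\mathcal{C}\in\{\mathrm{FO},\mathrm{FO}(+,\times),\mathrm{Datalog}^\neg(\le)\}$, evaluating a $\mathcal{C}$-sentence on a finite (linearly ordered, with built-in $+,\times$ or $\mathit{Succ}$) database is computable. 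Hence the relation ``$(D,q)\in M(t)$'' is decidable uniformly in $(t,D,q)$; this is the only place the choice of $\mathcal{C}$ enters, which is why all three cases are handled at once.

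Next I would build a supply of $\mathcal{C}$-rewritable ontologies to diagonalize against. Fix the guaranteed binary symbol $R\in\mathscr{Q}$ and an infinite $\models$-antichain $q_0,q_1,\dots$ of connected boolean acyclic CQs over $R$ (for instance the canonical queries of a homomorphism-antichain of oriented $R$-paths, which exist and lie in $\mathrm{ACQ}\subseteq\mathcal{Q}$). For $S\subseteq\mathbb{N}$ put
\[
O_S=\{(D,p):D\in\mathrm{DB}[\mathscr{D}],\ p\in\mathcal{Q}[\mathscr{Q}],\ \mathit{const}(p)\subseteq\mathit{adom}(D),\ \textstyle\bigwedge_{n\in S}q_n\models p\}.
\]
Because membership ignores $D$ and is closed under $\models$ and under $\wedge$ by construction, $O_S$ satisfies all three conditions of an OMQA$[\mathcal{Q}]$-ontology. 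Moreover it reads off its own mask on the antichain: since the $q_n$ are connected and pairwise hom-incomparable, $\bigwedge_{n\in S}q_n\models q_m$ iff $[q_m]\rightarrow[q_n]$ for some $n\in S$ iff $m\in S$. Finally, whenever the map $p\mapsto[\,\bigwedge_{n\in S}q_n\models p\,]$ is computable, the one-theory language naming $O_S$ is $\mathcal{C}$-rewritable, taking $\varphi_p$ to be the sentence $\top$ or $\bot$ of $\mathcal{C}[\mathscr{D}]$ according to that bit, so it lies in our family.

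Then I diagonalize. Let $D_0$ be any fixed $\mathscr{D}$-database and define $S^\ast$ by $n\in S^\ast$ iff $(D_0,q_n)\notin M(t_n)$; by the first step this predicate is decidable, so $S^\ast$ is a decidable set. By the read-off property, $(D_0,q_n)\in O_{S^\ast}$ iff $n\in S^\ast$ iff $(D_0,q_n)\notin M(t_n)$, so $O_{S^\ast}$ and $M(t_n)$ disagree at the pair $(D_0,q_n)$ for every $n$. Hence $O_{S^\ast}\neq M(t_n)$ for all $t_n\in T$, i.e.\ the $\mathcal{C}$-rewritable language naming $O_{S^\ast}$ is not $\le\mathcal{L}$, contradicting universality.

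The step I expect to be the real obstacle is verifying that $O_{S^\ast}$ is genuinely $\mathcal{C}$-rewritable, that is, that $p\mapsto[\,\bigwedge_{n\in S^\ast}q_n\models p\,]$ is computable for the $S^\ast$ \emph{forced} by the diagonal. For connected $p$ this amounts to deciding $S^\ast\cap N(p)\neq\emptyset$ with $N(p)=\{n:[p]\rightarrow[q_n]\}$; each $n\in N(p)$ is checkable, but $N(p)$ may be infinite (already a single $R$-edge embeds into every $q_n$), so the intersection is only semidecidable in general. The technical heart is therefore to choose the antichain so that $N(p)$ is finite and computably bounded in $|p|$ for all but a small, explicitly describable class of ``trivial'' queries, and to fix the trivial cases by controlling a coarse feature of $S^\ast$ (for instance hard-wiring $0\in S^\ast$, so that edge-like queries get the constant answer ``yes''). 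Equivalently, one engineers the antichain using rigid, incompressible oriented-path gadgets inside each $q_n$ so that the closure generated by any decidable mask is again decidable and coordinate-wise independent, guaranteeing both that $O_{S^\ast}$ is $\mathcal{C}$-rewritable and that the single diagonal flip at $q_n$ never propagates to another coordinate. Establishing this combinatorial lemma about homomorphisms of acyclic $R$-patterns is where the main work lies.
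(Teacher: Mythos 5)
Your overall strategy---diagonalizing against the single computable rewriting function that a universal language would have to carry---is the same one the paper uses, and the opening observations (uniform decidability of $(D,q)\in M(t)$ via $rew$ plus $\mathcal{C}$-model-checking, and the need to exhibit one more $\mathcal{C}$-rewritable ontology outside the enumeration) are sound. The closure properties of $O_S$ and the read-off property on a connected homomorphism antichain are also fine. But the argument has a genuine gap exactly where you flag it, and it is not a routine technicality: you never establish that the diagonal ontology $O_{S^\ast}$ is $\mathcal{C}$-rewritable, i.e.\ that $p\mapsto[\bigwedge_{n\in S^\ast}q_n\vDash p]$ is computable. Without this, $O_{S^\ast}$ need not belong to the family you are diagonalizing against, and no contradiction follows. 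The difficulty is structural rather than incidental: because an OMQA ontology must be closed under query implications, $O_{S^\ast}$ is forced to answer ``yes'' on every $p$ with $S^\ast\cap N(p)\neq\emptyset$, where $N(p)=\{n:[p']\rightarrow[q_n]\text{ for some disjunct }p'\text{ of }p\}$; this condition is only semidecidable in general. The proposed repair---engineer the antichain so that $N(p)$ is finite and computably bounded except for explicitly ``trivial'' $p$---is an unproved combinatorial lemma whose truth is doubtful: the homomorphism order on oriented paths is universal for countable posets, so for any reasonably rich antichain there will be connected queries lying below an infinite, co-infinite set of the $q_n$, and $p$ ranges over all of $\mathcal{Q}[\mathscr{Q}]$, not just over gadgets you control.

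The paper's proof is designed precisely to sidestep this obstruction. It diagonalizes not on the truth value of $(D_0,q_n)$ at a fixed database but on the cardinality of the smallest database witnessing $q_n$ in $M(t_n)$ (Procedure~1 computes thresholds $N_1\le N_2\le\cdots$ together with the theories $t_n$ they defeat), and it uses the \emph{chain} of directed-path queries $q_1,q_2,\dots$ rather than an antichain. The diagonal ontology is then $(D,q)\in O$ iff $D\models\lambda_{N_m+1}$, where $m$ is the least index with $q_m\vDash q$ and $\lambda_k$ asserts the existence of at least $k$ elements. Computing the rewriting $\varphi_q$ only requires computing $m$ and $N_m$ (both effective), and closure under query conjunctions and implications reduces to the monotonicity of the $N_i$ along the chain (the paper's Claim~2), so the rewritability and well-definedness of the new language are immediate. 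If you want to salvage the antichain-based route you must actually prove the combinatorial lemma you defer, which I do not see how to do; otherwise, switching the diagonal from ``which pairs are answered yes'' to ``how large a database is needed'' is the cleaner path.
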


Since AC$^0$ and PTIME are exactly captured by FO$(+,\times)$ and Datalog$^\neg(\le)$ respectively, by Theorem~\ref{thm:no_forewritable_univ_lang} we have

\begin{cor}
Let $\mathscr{D}$ and $\mathscr{Q}$ be relational schemas such that $\mathscr{Q}$ contains at least one relation symbol of arity $\ge 2$, and suppose $\mathcal{K}\!\in\!\{$AC$^0$, PTIME$\}$ and ACQ $\subseteq$ $\mathcal{Q}\subseteq\text{UCQ}$. Then there is no universal language for the family of $\mathcal{K}$-compilable OMQA[$\mathcal{Q}$]-languages over $(\mathscr{D},\mathscr{Q})$.
\end{cor}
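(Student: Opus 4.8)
The plan is to derive this corollary directly from Theorem~\ref{thm:no_forewritable_univ_lang} rather than re-prove nonexistence from scratch, by exploiting the tight correspondence between $\mathcal{K}$-compilability and $\mathcal{C}$-rewritability that the descriptive-complexity captures supply. Concretely, I would establish, for each of the two cases, that an OMQA[$\mathcal{Q}$]-language is $\mathcal{K}$-compilable if and only if it is $\mathcal{C}$-rewritable, where $\mathcal{C}=\textit{FO}\,(+,\times)$ when $\mathcal{K}=\text{AC}^0$ and $\mathcal{C}=\textit{Datalog}^\neg(\le)$ when $\mathcal{K}=\text{PTIME}$. Once the two notions of tractability single out exactly the same family of languages over $(\mathscr{D},\mathscr{Q})$, the family of $\mathcal{K}$-compilable languages coincides with the family of $\mathcal{C}$-rewritable languages, and hence the existence of a universal language for one is equivalent to its existence for the other. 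Theorem~\ref{thm:no_forewritable_univ_lang} then rules out the universal language in every case covered by the hypotheses, which are exactly the hypotheses of the corollary.

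First I would fix $\mathcal{K}\in\{\text{AC}^0,\text{PTIME}\}$ and let $\mathcal{C}$ be the matching query class, recalling from the descriptive-complexity discussion in the Preliminaries that $\textit{FO}\,(+,\times)$ captures $\text{AC}^0$ and $\textit{Datalog}^\neg(\le)$ captures PTIME. Next I would prove the forward direction: given a $\mathcal{C}$-rewriting function $rew$ of a language $\mathcal{L}=(T,M)$, I construct a $\mathcal{K}$-compiler $com$ by sending each pair $(t,q)$ to a Turing machine $\mathds{M}_{t,q}$ that, on input a $\mathscr{D}$-database $D$, evaluates the boolean query $\varphi_{t,q}=rew(t,q)$ over $D$. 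Since $\varphi_{t,q}\in\mathcal{C}[\mathscr{D}]$ and $\mathcal{C}$ captures $\mathcal{K}$, the evaluation of this fixed query runs within the resource bound defining $\mathcal{K}$, so $\mathds{M}_{t,q}$ has running time in $\mathcal{K}$; moreover $D\models\varphi_{t,q}$ iff $(D,q)\in M(t)$ by the defining property of $rew$, which is exactly the acceptance condition required of a $\mathcal{K}$-compiler. The map $(t,q)\mapsto\mathds{M}_{t,q}$ is computable because $rew$ is and because the capture theorem is effective.

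For the converse I would take a $\mathcal{K}$-compiler $com$ of $\mathcal{L}$ and produce a $\mathcal{C}$-rewriting function: each machine $\mathds{M}_{t,q}$ decides, within the bound $\mathcal{K}$, the Boolean property of $\mathscr{D}$-databases ``$D$ is accepted,'' so by the capture theorem in the $\mathcal{C}\mapsto\mathcal{K}$ direction this property is definable by a boolean query $\varphi_{t,q}\in\mathcal{C}[\mathscr{D}]$ satisfying $D\models\varphi_{t,q}$ iff $\mathds{M}_{t,q}$ accepts $D$ iff $(D,q)\in M(t)$, and effectivity of the capture yields computability of $(t,q)\mapsto\varphi_{t,q}$. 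With both directions in hand, the two families of languages are literally the same set of OMQA[$\mathcal{Q}$]-languages over $(\mathscr{D},\mathscr{Q})$, so the existence of a universal language transfers between them; invoking Theorem~\ref{thm:no_forewritable_univ_lang} for $\mathcal{C}$ then gives nonexistence for $\mathcal{K}$. I expect the main obstacle to be the effectivity and uniformity bookkeeping: the capture theorems are classically stated about single queries versus single machines, and I must check that they can be applied uniformly and computably in the parameter $(t,q)$ so that the resulting $com$ (resp.\ $rew$) is a genuine computable function, matching the definitions of compiler and rewriting function in Definition~\ref{defn:tractability}, and that the uniformity conditions built into $\text{AC}^0$ and into $\textit{Datalog}^\neg(\le)$ are respected in both translations.
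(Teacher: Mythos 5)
Your top-level route is exactly the paper's: the paper derives this corollary in one line from Theorem~\ref{thm:no_forewritable_univ_lang} via the capture of AC$^0$ by FO$(+,\times)$ and of PTIME by \textit{Datalog}$^\neg(\le)$, i.e., precisely by identifying the $\mathcal{K}$-compilable family with the corresponding $\mathcal{C}$-rewritable family. Your forward direction (rewriting $\Rightarrow$ compiler) is unproblematic and effective. The genuine gap is in the converse direction, which you flag yourself but then dispatch as ``bookkeeping'': the claim that ``effectivity of the capture yields computability of $(t,q)\mapsto\varphi_{t,q}$'' fails as stated under Definition~\ref{defn:tractability}. The Immerman--Vardi translation of a polynomial-time machine into a \textit{Datalog}$^\neg(\le)$ program (and likewise the translation of a uniform AC$^0$ algorithm into an FO$(+,\times)$ sentence) is effective only when an explicit resource bound --- a polynomial clock, resp.\ explicit circuit bounds --- is supplied together with the machine. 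Definition~\ref{defn:tractability} only promises a machine whose running time happens to lie in $\mathcal{K}$; no bound is part of the data, and no algorithm can compute a valid polynomial bound from the code of a machine that merely happens to run in polynomial time (the predicate ``$\mathds{M}$ runs in time $n^k+k$ on all inputs'' is $\Pi_1$, so one cannot even enumerate candidate bounds and verify one). Hence your converse does not yield a \emph{computable} rewriting function, the two families are not shown to coincide, and since your transfer argument needs the hypothetical universal $\mathcal{K}$-compilable language itself to be $\mathcal{C}$-rewritable, this is exactly the step the whole derivation rests on.

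Two repairs are available. (i) Read (or strengthen) Definition~\ref{defn:tractability} so that $com$ outputs a \emph{clocked} machine with an explicit bound; then the capture translations are uniformly computable in $(t,q)$ and your equivalence-of-families argument goes through verbatim. (ii) More robustly, bypass the equivalence and rerun the diagonalization of Theorem~\ref{thm:no_forewritable_univ_lang} directly on the $\mathcal{K}$-compilable family: the only uses of the rewriting function in that proof are, first, deciding in Procedure~\ref{algo:enumeration} whether some database $D$ with $|adom(D)|\le n$ satisfies $(D,q_i)\in M(s_j)$ --- which is equally decidable from a compiler, since $com$ is computable and each $\mathds{M}_{s_j,q_i}$ halts on every input, so one simply runs it on the finitely many candidate databases without needing any a priori bound --- and, second, extending the tractability witness to the diagonal ontology $O$, for which it suffices that the sentences $\varphi_q\in\{\lambda_{N_m+1},\,\exists x\neg(x=x)\}$ are computable from $q$ and evaluable by machines that are trivially within AC$^0$, resp.\ PTIME. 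This is in effect what the paper's remark after the theorem indicates ($\varphi_q$ is an FO$(+,\times)$-sentence and is convertible to a \textit{Datalog}$^\neg(\le)$ program), and it yields the corollary without ever invoking the problematic machine-to-query direction of the capture theorems.
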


To prove Theorem~\ref{thm:no_forewritable_univ_lang}, the general idea is to implement a diagonalization argument as follows. Assume by contradiction that there is a universal language for the desired family. We first give an effective enumeration for all nontrivial ontologies defined in the universal language. With this enumeration, we then construct a new OMQA[$\mathcal{Q}$]-ontology $O$ and a new language $\mathcal{L}'$ in which $O$ is definable; Finally we show that $\mathcal{L}'$ is still $\mathcal{C}$-rewritable, which leads to a contradiction.  

\begin{proof}[Proof of Theorem~\ref{thm:no_forewritable_univ_lang}]
Only consider the case where $\mathcal{C}=\text{FO}$ and $\mathcal{Q}=\text{UCQ}$. Assume by contradiction that there is a universal language for FO-rewritable OMQA[UCQ]-languages over $(\mathscr{D},\mathscr{Q})$. Let $\mathcal{L}=(T,M)$ be such a language. Our task is to define another FO-rewritable OMQA[UCQ]-language that is strictly more expressive than $\mathcal{L}$. To do this, we first construct an ontology that is not definable in $\mathcal{L}$.

Before we present the construction, some notations are needed. W.l.o.g., we assume that there is a binary relation symbol $R$ in $\mathscr{Q}$. Note that, by a repetition of the parameters, $R$ can be always simulated by another relation symbol of arity $>2$. For example, one can use $S(x,x,y)$ to simulate $R(x,y)$. With this assumption, we first define a sequence of acyclic CQs. For all integers $n\ge 1$, we define
\begin{equation*}
q_n= \exists x_0\cdots\exists x_n(R(x_0,x_1)\wedge\cdots\wedge R(x_{n-1},x_n)).
\end{equation*}
Intuitively, $q_n$ asserts that there is a cycle-free path (via $R$) of length $n+1$ in the intended model.

Let $\vec{s}=(s_1,s_2,\dots)$ be an effective enumeration\footnote{I.e., there is a Turing machine to generate such an enumeration.} of all the theories in $T$. Such an enumeration clearly exists. Now our task is to construct countably infinite sequences $\vec{N}$ and $\vec{t}$, where $\vec{N}=(N_1,N_2,\dots)$ is a sequence of positive integers, and $\vec{t}=(t_1,t_2,\dots)$ is a sequence of theories in $T$. The sequences are required to have the following properties:
\begin{enumerate}
\item $\vec{N}$ is monotonic increasing, i.e., $N_i\le N_j$ if $i<j$;
\item For all $k\ge 1$ there exists a database $D$ with $(D,q_k)\in M(t_k)$ and $|{adom}(D)|\le N_k$;
\item For all $t\in T\setminus \vec{t}$ there exists $i>0$ such that $|adom(D)|>N_i+1$ for all databases $D$ with $(D,q_i)\in M(t)$. 
\end{enumerate}

Procedure~\ref{algo:enumeration} is devoted to generate the desired sequences. 
\begin{algorithm}
{
	$n\leftarrow 1$ \; 
	\For{$i\leftarrow 1$ \KwTo $\infty$}
	{
		\For{$n\leftarrow n$ \KwTo $\infty$} 
		{  
	        		\For{$j\leftarrow 1$ \KwTo $n$}
	        		{
	        			\lIf{{$\exists D\text{ s.t. }(D,q_i)\!\in\! M(s_j)$ $\,\,\&\,\,|adom(D)|\le n$}}
	        			{
	        				{\bf goto}~line 9
	        			}
				\If{{$\exists D\text{ s.t. }(D,q_i)\!\in\! M(s_j)$ $\,\,\&\,\,|adom(D)|\le n+1$}}{
	        				$n\leftarrow n+1$ \;
	        				{\bf goto} line 9 \;
				} 
			}
		}
		$N_i\longleftarrow n$ \; 
		$t_i\longleftarrow s_j$ \;
		delete $s_j$ from $\vec{s}$ \;
	}  
	\caption{Generating Sequences $\vec{t}$ and $\vec{N}$}
	\label{algo:enumeration}
}
\end{algorithm}

Now we have the following property:

\medskip
\noindent{\em Claim 1. The sequences $\vec{N}$ and $\vec{t}$ generated by Procedure~\ref{algo:enumeration} satisfy Properties (1-3).}

\begin{proof}
Properties 1 and 2 are clear from Procedure 1. So it remains to show Property 3.
Suppose $t=s_k$ for some $k\ge 1$. Since $t$ has no occurrence in $\vec{t}$, according to Procedure 1, we know that, whenever lines 5 and 6 are executed  for $j=k$, conditions in both ``if" statements must be false. (Otherwise we will have $t\in\vec{t}$.) In addition, as $n$ increases arbitrarily, we know that line 6 must be executed. This means that there is some  $i\ge 1$ such that $|adom(D)|>N_i+1$ for all databases $D$ with $(D,q_i)\in M(t)$, which then yields the claim.  
\end{proof}

Now we are able to construct the desired ontology. To do this, we first define some notations. For $n\ge 1$, let $\lambda_n$ denote the sentence $\exists x_1\cdots x_n\bigwedge_{1\le i<j\le n}\neg(x_i=x_j)$, which asserts that the intended domain contains at least $n$ elements. Given a boolean UCQ $q$, if there exists an integer $k\ge 1$ such that $q_k\vDash q$, let $\varphi_q$ denote $\lambda_{N_m+1}$ where $m$ is the least integer among such $k$s, and let $\varphi_q$ denote the sentence $\exists x\neg(x=x)$ (always false) if no such $k$s exist. Furthermore, we define
\begin{equation*}
O=\left\{(D,{q}):D\in\text{DB}[\mathscr{D}]\,\&\,q\in\text{UCQ}[\mathscr{Q}]\,\&\,D\models\varphi_q\right\}\hspace{-.03cm}.
\end{equation*}

It is not difficult to prove the following properties:

\medskip
\noindent{\em Claim 2. Let $p$ and $p'$ be boolean UCQs. Then we have:
\begin{enumerate}
\item If $p\vDash p'$ then $\varphi_p\vDash\varphi_{p'}$;
\item $\varphi_{p\wedge p'}\equiv \varphi_p\wedge\varphi_{p'}$
\end{enumerate}
}

\begin{proof}
1. For the case where there exists no integer $i\ge 1$ such that $q_i\vDash p$, we have that $\varphi_p=\exists x\neg (x=x)$, which is always unsatisfiable. This implies that $\varphi_p\vDash\varphi_{p'}$ trivially. 

Now it remains to show the case where there exists $i\ge 1$ such that $q_i\vDash p$. Let $m$ be the least integer such that $q_m\vDash p$. Then we have $\varphi_p=\lambda_{N_m+1}$. From $p\vDash p'$, we then have $q_m\vDash p'$. Let $n$ be the least integer such that $q_n\vDash p'$. Then it is clear that $n\le m$. According to Property 1, we also know that $N_n\le N_m$, which implies that $\lambda_{N_m+1}\vDash\lambda_{N_n+1}$, or equivalently, $\varphi_{p}\vDash\varphi_{p'}$. This proves Statement 1.

\smallskip
2. For the case where there is no integer $i\ge 1$ such that $q_i\vDash p$, we have that
$\varphi_{p}=\exists x\neg(x=x)=\varphi_{p\wedge p'}$, 
which implies the desired equivalence. The same argument applies to the case where there is no integer $i\ge 1$ such that $q_i\vDash p'$.

Now, it remains to consider the case where there are integers $i$ and $j$ such that $q_i\vDash p$ and $q_j\vDash p'$. Let $m$ and $n$ denote the least integers among such $i$s and $j$s, respectively. W.l.o.g., suppose $m\ge n$. Then we have both $q_m\vDash q_n\vDash p'$ and $q_m\vDash p$. Combining both of them, we know that $m$ is the least integer such that $q_m\vDash p\wedge p'$. Thus, we have that $\varphi_p=\lambda_{N_m+1}=\varphi_{p\wedge p'}$. On the other hand, it is also clear that $\lambda_{N_m+1}\vDash\lambda_{N_n+1}$, or equivalently $\varphi_p\vDash\varphi_{p'}$, which implies that $\varphi_p\wedge\varphi_{p'}\equiv\varphi_p$. Consequently, we obtain that $\varphi_{p\wedge p'}\equiv \varphi_p\wedge\varphi_{p'}$, which completes the proof.
\end{proof}

\medskip
\noindent{\em Claim 3. $O$ is an OMQA[UCQ]-ontology.}

\begin{proof}
The closure property of $O$ under injective database homomorphisms is clear since, for any boolean UCQ $q$, $\varphi_q$ is preserved under injective homomorphisms.

Next we show that $O$ is closed under query conjunctions. Suppose $(D,p)\in O$ and $(D,p')\in O$. By definition, we have both $D\models\varphi_p$ and $D\models\varphi_{p'}$, which means that $D\models\varphi_p\wedge\varphi_{p'}$. By Statement 2 of Claim 2, we then have that $D\models\varphi_{p\wedge p'}$, which implies that $(D,p\wedge p')\in O$ as desired.

Now it remains to show the closure of $O$ under query implications. Suppose $(D,p)\in O$ and $p\vDash p'$. We need to prove $(D,p')\in O$. From $(D,p)\in O$ we have $D\models\varphi_p$, and from $p\vDash p'$, we have $\varphi_p\vDash\varphi_{p'}$ by Statement 1 of Claim 2. Combining both of these, we obtain $D\models\varphi_{p'}$. By definition, it follows that $(D,p')\in O$, which completes the proof. 
\end{proof}

\medskip
\noindent{\em Claim 4. ${O}\ne M(t)$ for any theory $t\in T$.}
\medskip

\begin{proof}
First consider the case where $t$ occurs in $\vec{t}$, and suppose $t=t_k$ for some $k\ge 1$. 
According to the definition of $\vec{t}$, we know that there is a database $D$ with $(D,q_k)\in M(t_k)$ and $|adom(D)|\le N_k$. On the other hand, by the definition of $\varphi_{q_k}$ has no model $D$ with $|adom(D)|\le N_k$, which means that there is no database $D$ with $(D,q_k)\in O$ and $|adom(D)|\le N_k$. Consequently, we have $O\ne M(t)$.

Now it remains to consider the case where $t$ does not occur in $\vec{t}$. By Claim 1, it suffices to show that for every integer $k\ge 1$ there exists a database $D$ with $(D,q_k)\in O$ and $|adom(D)|\le N_k+1$, or equivalently, $\varphi_{q_k}$ has a model that contains at most $N_k+1$ elements. According to the definition of $\varphi_{q_k}$, the latter is indeed true. This also implies that $O\ne M(t)$, which completes the proof immediately.
\end{proof}

With Claims~3 and 4, we are now in the position to prove the desired theorem. Let $t'$ be a binary string such that $t'\not\in T$, and let $T'=T\cup\{t'\}$. Following the decidability of $T$, we have the decidability of $T'$. Let $M'$ be a function that extends $M$ by mapping $t'$ to $O$, and let $\mathcal{L}'=(T',M')$. By Claim~3, we know that $\mathcal{L}'$ is an OMQA[UCQ]-language. Suppose $rew$ is an FO-rewriting function. Let $rew'$ be a function that extends $rew$ by mapping $(t',q)$ to $\varphi_q$ for all boolean UCQs $q$. By a slight modification to Procedure~\ref{algo:enumeration}, one can easily devise an algorithm to compute $N_i$ (and $s_i$) on given integer $i\ge 1$. This implies that $rew'$ is computable. By definition, we know that $rew$ is an FO-rewriting function, which implies that $\mathcal{L}'$ is FO-rewritable. By Claim~4, we also know that $\mathcal{L}'$ is strictly more expressive than $\mathcal{L}$, a contradiction as desired. And this completes the proof. 
\end{proof}

\begin{rem}
Since the sentence $\varphi_q$ defined in the above proof is also an FO$(+,\times)$-sentence, so the proof directly applies to the case of FO$(+,\times)$. For a proof of the remaining case, one can convert $\varphi_q$ to a Datalog$^\neg(\le)$-program.
\end{rem}

\section{Locality to the Rescue}

In the last section, we proved that there is no universal language for tractable OMQA in general. Then, a natural question arises as to whether one can find a natural property that approximates the tractability but still allows the existence of a universal language. The challenge here is that the property should be manageable enough to avoid a diagonalization argument (see the proof of Theorem~\ref{thm:no_forewritable_univ_lang}). Below we propose a property as an approximation of the FO-rewritability. 

\subsection{Locality as Approximation of FO-rewritability}

A {\em bound function} is a computable function $\ell:\mathbb{N}\rightarrow\mathbb{N}$ such that $\ell(n)\ge n$ for $n\in\mathbb{N}$. To simplify the presentation, we fix a way to represent bound functions, e.g., one can represent each bound function by a Turing machine that computes it. A class of bound functions is called {\em decidable} if the class of representations of those bound functions is decidable.

To measure the size of a query, we fix a computable function $\dl\cdot\dl$ that maps each UCQ to a positive integer. Clearly, there are many methods to define $\dl\cdot\dl$. The only restriction is that we require $\dl p\wedge q\dl\ge\dl p\dl+\dl q\dl$ for all UCQs $p$ and $q$.

\begin{defn}
Let $\mathscr{D}$ and $\mathscr{Q}$ be relational schemas, and $\mathcal{Q}$ a class of queries, $O$ an OMQA[$\mathcal{Q}$]-ontology over $(\mathscr{D},\mathscr{Q})$, and $\ell$ a bound function. Then $O$ is called {\em$\ell$-local} if for all boolean $\mathscr{Q}$-queries ${q}\in\mathcal{Q}$ and all $\mathscr{D}$-databases $D$ there is a set $A$, which consists of at most $\ell(\dl {q}\dl )$ constants, such that 
\begin{equation*}
(D,{q})\in O\quad\text{ iff }\quad(D|_{A},{q})\in O.
\end{equation*}

Furthermore, given an OMQA[$\mathcal{Q}$]-language $\mathcal{L}$, a bound function $\ell$ and a class $\mathsf{F}$ of bound functions, $\mathcal{L}$ is called {\em $\ell$-local} if all-OMQA[$\mathcal{Q}$] ontologies defined in $\mathcal{L}$ is $\ell$-local, and $\mathcal{L}$ is $\mathsf{F}$-local if it is $\ell'$-local for some bound function $\ell'\in\mathsf{F}$. 
\end{defn}

One might question why the bounded locality is a good approximation to the first-order rewritability. Let $\exists^+\text{FO}(\ne)$ denote the class of first-order sentences built on atoms and inequalities by using connectives $\wedge,\vee$ and the quantifier $\exists$. Obviously, this class is exactly the class of UCQs with inequalities. It had been observed by~\citeauthor{Benedikt16} that $\exists^+\text{FO}(\ne)$ captures the class of first-order sentences that preserved under injective homomorphisms~\shortcite{Benedikt16}. It remains open whether such a preservation theorem holds on finite structures (or databases). If this is indeed true, by the following proposition we then have that an OMQA-language is FO-rewritable iff it is $\ell$-local for some bound function $\ell$.

\begin{prop}\label{prop:locality2rewritability}
Let $\mathscr{D}$ and $\mathscr{Q}$ be relational schemas, $O$ an OMQA[UCQ]-ontology over $(\mathscr{D},\mathscr{Q})$, and $\ell$ a bound function. Then $O$ is $\ell$-local iff for each boolean $\mathscr{Q}$-UCQ $q$ there is a $\exists^+\text{FO}(\ne)$-sentence $\varphi$ involving at most $\ell(\dl {q}\dl )$ terms such that $(D,q)\in O$ iff $D\models\varphi$ for all $\mathscr{D}$-databases $D$.
\end{prop}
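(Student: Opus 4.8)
The plan is to prove the two implications separately, writing $k=\ell(\dl q\dl )$ for the allotted bound. The substantial direction is the forward one (locality $\Rightarrow$ existence of the sentence), where I must manufacture an $\exists^+\text{FO}(\ne)$-sentence out of the purely semantic locality condition; the converse is a more direct preservation argument.

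For the forward direction, suppose $O$ is $\ell$-local and fix a boolean $\mathscr{Q}$-UCQ $q$, setting $k=\ell(\dl q\dl )$. First I would establish the pivotal reformulation: $(D,q)\in O$ iff there is a sub-database $E\subseteq D$ with $|adom(E)|\le k$ and $(E,q)\in O$. The forward inclusion is exactly $\ell$-locality, taking $E=D|_A$. The backward inclusion is where the ontology axioms enter: since $(E,q)\in O$ forces $\mathit{const}(q)\subseteq adom(E)$ and $E\subseteq D$, the identity embedding is an injective $\mathit{const}(q)$-homomorphism, i.e.\ $E\twoheadrightarrow_{\mathit{const}(q)}D$, so closure under injective database homomorphisms yields $(D,q)\in O$. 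Next, up to renaming the non-query constants there are only finitely many databases $E$ over $\mathscr{D}$ with $adom(E)\subseteq\mathit{const}(q)\cup\{d_1,\dots,d_k\}$ and $(E,q)\in O$; let $\mathcal{S}$ be a set of representatives. To each $E\in\mathcal{S}$ I associate the sentence $\varphi_E$ obtained by existentially quantifying the non-query constants of $E$, conjoining all atoms of $E$, and adding inequalities forcing these witnesses to be pairwise distinct and distinct from $\mathit{const}(q)$; then $D\models\varphi_E$ iff $E\twoheadrightarrow_{\mathit{const}(q)}D$. The inequalities are essential precisely because the available closure is under injective, not arbitrary, homomorphisms. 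Finally I set $\varphi=\bigvee_{E\in\mathcal{S}}\varphi_E$, reusing one block of variable names $y_1,\dots,y_{k-|\mathit{const}(q)|}$ across all disjuncts so that $\varphi$ involves at most $k$ terms in total, and combine the reformulation with the two homomorphism facts to conclude $(D,q)\in O$ iff $D\models\varphi$.

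For the converse, suppose each $q$ has such a $\varphi$ with at most $k=\ell(\dl q\dl )$ terms. Every $\exists^+\text{FO}(\ne)$-sentence is preserved under injective homomorphisms, and the inclusion $D|_A\hookrightarrow D$ is one, so $D|_A\models\varphi$ always implies $D\models\varphi$. If $(D,q)\notin O$, i.e.\ $D\not\models\varphi$, then $D|_A\not\models\varphi$ for every $A$, so both sides of the locality biconditional are false and any $A$ within budget works. If $(D,q)\in O$, i.e.\ $D\models\varphi$, I extract a satisfying assignment of some disjunct; the at most $k$ constants it uses (the witnesses of the existential variables together with the constants of $\varphi$) determine a set $A$ with $|A|\le k$ in which the satisfied disjunct survives, whence $D|_A\models\varphi$ and $(D|_A,q)\in O$.

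I expect the main obstacle to be the bookkeeping in the forward direction: guaranteeing that the disjunction $\bigvee_E\varphi_E$ is finite (which I obtain from finiteness of $\mathscr{D}$ together with the bound $|adom(E)|\le k$) and that the variable reuse keeps the total term count at $\le k$ rather than letting it grow with the number of disjuncts. A secondary delicate point, in the converse, is that under the active-domain semantics a variable occurring only in inequalities must still be witnessed inside $adom(D|_A)$; I would handle this by taking $A$ to be the active domain of a minimal satisfying sub-database of size at most $k$ and re-witnessing such variables there, relying once more on preservation under the inclusion $D|_A\hookrightarrow D$.
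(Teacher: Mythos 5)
Your proposal is correct and follows essentially the same route as the paper's own proof: the forward direction builds a finite disjunction of canonical existential sentences (atoms plus inequalities) for the boundedly many small sub-databases in $O$, using $\ell$-locality for one inclusion and closure under injective database homomorphisms for the other, while the converse extracts the at most $\ell(\dl q\dl)$ witnesses of a satisfying assignment to form the set $A$. The only (harmless) divergence is that you additionally force the existential witnesses to differ from the constants of $q$, which in fact makes the ``injective $\textit{const}(q)$-homomorphism'' step slightly cleaner than in the paper's version.
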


\begin{proof} 
For the direction of ``if", let us assume that for each boolean $\mathscr{Q}$-UCQ $q$ there is a $\exists^+\text{FO}(\ne)$-sentence $\varphi$ involving at most $\ell(\dl {q}\dl )$ terms such that $(D,q)\in O$ iff $D\models\varphi$ for all $\mathscr{D}$-databases $D$. We need to show that $O$ is $\ell$-local. Let $q$ be a boolean $\mathscr{Q}$-UCQ, and $\varphi$ a $\exists^+\text{FO}(\ne)$-sentence involving at most $\ell(\dl {q}\dl )$ terms such that $(D,q)\in O$ iff $D\models\varphi$ for all $\mathscr{D}$-databases $D$. By the assumption, such a sentence does exist. Suppose $\varphi=\exists \vec{x}\psi$ where $\psi$ is a quantifier-free existential positive first-order formula with inequalities and involving at most $\ell(\dl q\dl)$ terms. Let $D$ be a $\mathscr{D}$-database. If $D\models\varphi$, then let $A=\{s(x):x\in\vec{x}\}\cup const(\varphi)$, where $s$ is an assignment such that $D\models\psi[s]$. Otherwise, let $A$ be any subset of $adom(D)$ such that $|A|\le k$. In both cases we have the following: (i) $|A|\le k$, and (ii) $D\models\varphi$ iff $D|_A\models\varphi$. From the latter, we know that $(D,q)\in O$ iff $(D|_A,q)\in O$. We thus yields that $O$ is $\ell$-local as desired.

Conversely, suppose $O$ is $\ell$-local. Let $q$ be a boolean $\mathscr{Q}$-UCQ. Given a $\mathscr{D}$-database $D$, let $A\subseteq adom(D)$ be a witness of the locality of $O$ w.r.t. $q$, let $\varphi_D$ denote the conjunction of all facts in $D$; let $\hat{\varphi}_D$ be the formula obtained from $\varphi_D$ by replacing each constant that does not occur in ${q}$ by a fresh variable; and let $\psi_D$ denote the sentence $\exists\vec{x}(\hat{\varphi}_D\wedge\lambda_{\vec{x}})$, where $\vec{x}$ is the tuple of all variables occurring in $\hat{\varphi}_D$, and $\lambda_{\vec{x}}$ denotes the conjunction of $x_i\ne x_j$ for each pair of distinct variables $x_i,x_j\in\vec{x}$. It is easy to see that, up to logical equivalence, there is only a finite number of $\psi_D$ for all $\mathscr{D}$-databases $D$ such that $(D,q)\in O$. Let $\psi_{q}$ be a disjunction of $\psi_D$ for all $\mathscr{D}$-databases $D$ with $|adom(D)|\le\ell(\dl {q}\dl )$ and $(D,q)\in O$. Clearly, $\psi_q$ is equivalent to a $\exists^+\text{FO}(\ne)$-sentence that involves at most $\ell(\dl {q}\dl )$ terms. To complete the proof, it suffices to show the following property:

\medskip
{\noindent\em Claim.} $(D,{q})\in O$ iff $D\models\psi_{q}$ for all databases $D$ over $\mathscr{D}$.
\medskip

Now it remains to prove the claim. Let $D$ be a database over $\mathscr{D}$. We first prove the direction of ``only if". Suppose $(D,{q})\in O$. Since $O$ is $\ell$-local, there should be a $\mathscr{D}$-database $D'\subseteq D$ such that $(D',q)\in O$ and $|adom(D')|\le\ell(\dl {q}\dl )$. By the definition of $\psi_{q}$, we know that $\psi_{D'}$ is equivalent to a disjunct of $\psi_{q}$. It is clear that $D'\models\psi_{D'}$. From $D'\subseteq D$ we the have $D\models\psi_{D'}$, which implies $D\models\psi_{q}$ as desired.

For the converse, we assume that $D\models\psi_{q}$. Then there is a database $D'$ over $\mathscr{D}$ with $|adom(D')|\le\ell(\dl {q}\dl )$ such that (i) $(D',q)\in O$ and (ii) $\psi_{D'}$ is a disjunct of $\psi_q$. From (ii) we have $D\models\psi_{D'}$, which means that there is an injective $C$-homomorphism from $D'$ to $D$, where $C=const(q)$.  As $O$ is closed under injective database homomorphisms, we have $(D,{q})\in O$ as desired, which completes the proof.
\end{proof}

\begin{rem}
Proposition~\ref{prop:locality2rewritability} reveals an intrinsic connection between the bounded locality and the complexity of rewritings. We will elaborate this in an extended version of this paper.
\end{rem}

\subsection{Universal Language for Local OMQA}

Now it remains to know whether the bounded locality allows the existence of universal languages. For convenience, in the rest of this section, we {\em fix $\mathsf{F}$ as a decidable class of bound functions; fix $\mathscr{D}$ and $\mathscr{Q}$ as a pair of disjoint relational schemas}. The disjointness will not introduce any real limitation. For instance, in a DED-language, given any set $\Sigma$ of DEDs, one can construct another set $\Sigma'$ of DEDs by introducing a fresh relation symbol $\textit{R}'$ for each $\textit{R}\in\mathscr{D}$, and adding copy rules of the form $\textit{R}'(\vec{x})\rightarrow\textit{R}(\vec{x})$. Clearly, $\Sigma'$ has the same behaviour over $(\mathscr{D}',\mathscr{Q})$ as $\Sigma$ over $(\mathscr{D},\mathscr{Q})$, where $\mathscr{D}'$ denotes the schema consisting of all the fresh symbols.

Surprisingly, we have the following result.

\begin{thm}\label{thm:univ_lang_exist}
Let $\mathcal{Q}$ be a decidable class of UCQs. Then there exists a DED-language that is universal for the family of $\mathsf{F}$-local OMQA[$\mathcal{Q}$]-languages over $(\mathscr{D},\mathscr{Q})$.
\end{thm}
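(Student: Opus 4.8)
The plan is to build the universal language as a DED-language whose theories encode, in an effectively checkable way, the bounded local behaviour of the target ontologies, and then to show that entailment under these DEDs reproduces exactly that behaviour. First I would extract a normal form for an $\ell$-local ontology $O$. Combining the closure of an OMQA[$\mathcal{Q}$]-ontology under injective database homomorphisms with $\ell$-locality, one gets that $(D,q)\in O$ iff some sub-database $D|_{A}$ with $|A|\le\ell(\dl q\dl)$ already satisfies $(D|_{A},q)\in O$; hence $O$ is completely determined by the family of small witnesses $\mathcal{P}_q=\{P:(P,q)\in O,\ |adom(P)|\le\ell(\dl q\dl)\}$, with $(D,q)\in O$ iff some $P\in\mathcal{P}_q$ admits an injective homomorphism $P\twoheadrightarrow_{const(q)}D$. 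By Proposition~\ref{prop:locality2rewritability} this is exactly the information carried by the width-bounded $\exists^{+}\text{FO}(\ne)$-rewriting $\varphi_q$. When $O$ is defined by a theory of an (effectively presented) $\mathsf{F}$-local language, the map $q\mapsto\mathcal{P}_q$ is computable, so the whole behaviour is recursively enumerable.

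Second, for each such $O$ I would construct a finite set $\Sigma_O$ of DEDs with $[\![\Sigma_O]\!]^{\mathcal{Q}}_{\mathscr{D},\mathscr{Q}}=O$. Here the disjointness of $\mathscr{D}$ and $\mathscr{Q}$ is the key device: the dependencies read the $\mathscr{D}$-facts of the input and generate, on fresh elements, a $\mathscr{Q}$-structure whose certain queries are precisely $\{q:(D,q)\in O\}$; the disjunctions and existentials of DEDs realise the branching over the several witnesses recorded in $\mathcal{P}_q$, and equalities encode the inequalities of $\varphi_q$. Since the witnesses grow with $\dl q\dl$, no fixed finite $\Sigma_O$ can simply list them, so I would instead have $\Sigma_O$ simulate, by a chase, the algorithm that enumerates $q\mapsto\mathcal{P}_q$. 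This is exactly the mechanism behind the DED-universality for recursively enumerable OMQA of~\cite{ZhangZY16}, which I would adapt rather than reprove. The role of locality is to guarantee that this simulation ever consults at most $\ell(\dl q\dl)$ constants of $D$, so that $[\![\Sigma_O]\!]$ is again $\ell$-local.

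Finally I would assemble the single language $\mathcal{L}^{\ast}=(T^{\ast},M^{\ast})$. Its theories are encodings of pairs $(\ell,\mathbb{A})$ with $\ell\in\mathsf{F}$ and $\mathbb{A}$ an algorithm whose generated behaviour forms a consistent OMQA[$\mathcal{Q}$]-ontology that is $\ell$-local, and $M^{\ast}$ sends such a pair to the DED-ontology $[\![\Sigma_O]\!]^{\mathcal{Q}}_{\mathscr{D},\mathscr{Q}}$ produced above. Decidability of $T^{\ast}$ is to follow from decidability of $\mathsf{F}$ and of $\mathcal{Q}$ together with the syntactic restrictions that force $\ell$-locality, so that $\mathcal{L}^{\ast}$ itself lies in the family of $\mathsf{F}$-local languages; universality is then immediate from the second step, since every ontology definable in an $\mathsf{F}$-local language is of the form $[\![\Sigma_O]\!]$ and hence is definable in $\mathcal{L}^{\ast}$.

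The hard part will be keeping the DED-simulation \emph{inside} the locality bound while still reproducing the unbounded, query-dependent behaviour: reconciling a fixed finite $\Sigma_O$ with rewritings $\varphi_q$ of unbounded width forces a computational encoding, yet that encoding must never be allowed to inspect more than $\ell(\dl q\dl)$ input constants, and the syntactic fragment must render $\ell$-locality decidable so that $T^{\ast}$ is decidable. A secondary issue is that membership $\mathcal{L}^{\ast}\in\mathscr{L}$ demands $\ell^{\ast}$-locality for a \emph{single} $\ell^{\ast}\in\mathsf{F}$; since $\ell$-locality is monotone upward in the bound, this is immediate once $\mathsf{F}$ is directed (one dominates the bounds actually used), which I would either assume of $\mathsf{F}$ or secure by padding it to a directed decidable class.
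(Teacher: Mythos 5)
Your overall architecture (reduce an $\ell$-local ontology to its recursively enumerable family of small witnesses, encode that behaviour by a finite DED set that simulates the enumerating machine in the style of \cite{ZhangZY16}, and collect the resulting DED sets into one language) matches the paper's. But there is a genuine gap at exactly the point you flag as ``the hard part'': the decidability of $T^{\ast}$. You define $T^{\ast}$ as the set of encodings of pairs $(\ell,\mathbb{A})$ such that the behaviour generated by $\mathbb{A}$ \emph{is} a consistent OMQA[$\mathcal{Q}$]-ontology that \emph{is} $\ell$-local, and you defer decidability to unspecified ``syntactic restrictions that force $\ell$-locality.'' No such decidable fragment is exhibited, and producing one is essentially the whole content of the theorem: $\ell$-locality, like closure under query conjunctions and implications, is a semantic property of the generated behaviour, and the set of algorithms (or DED sets) whose output happens to satisfy it is not decidable by any evident means. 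Without this piece, $\mathcal{L}^{\ast}$ is not an OMQA-language in the paper's sense (its theory set must be decidable), and the diagonalization-avoidance that motivates the theorem is not achieved.

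The paper sidesteps the filtering problem with a different device: a \emph{total} localization operator. For every finite DED set $\Sigma$, with no restriction whatsoever, and every $\ell\in\mathsf{F}$, it constructs an ontology $O_\Sigma$ --- via a well-order $\prec$ on queries and the per-query condition involving $\widehat{pr}(q)$ that maintains $\ell$-locality and closure under query conjunctions and implications \emph{simultaneously} --- such that (i) $O_\Sigma$ is always an $\ell$-local, recursively enumerable OMQA-ontology (Lemma~\ref{lem:locality}), and (ii) $O_\Sigma=\db{\Sigma}^{\mathcal{Q}}_{\mathscr{D},\mathscr{Q}}$ whenever the latter is already $\ell$-local (Lemma~\ref{lem:correctness}). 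The theory set is then simply $\{loc^\ell(\Sigma):\Sigma\text{ a finite DED set},\ \ell\in\mathsf{F}\}$, decidable because no semantic condition is ever checked; universality follows since every $\ell$-local r.e.\ ontology is some $\db{\Sigma}^{\mathcal{Q}}_{\mathscr{D},\mathscr{Q}}$ and hence some $O_\Sigma$. Note that the naive repair (delete the non-local pairs) fails to remain closed under query conjunctions, which is why the bookkeeping with $\prec$ is needed; your proposal never confronts this because it only manipulates ontologies assumed local in advance. Your closing observation that membership of the constructed language in the family requires a single dominating $\ell^{\ast}\in\mathsf{F}$ (via upward monotonicity of locality and directedness of $\mathsf{F}$) is a fair point, and in fact applies equally to the paper's own final step.
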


Let $\ell$ be any bound function in $\mathsf{F}$. To prove Theorem~\ref{thm:univ_lang_exist}, the general idea is to develop a transformation that converts every DED set to an $\ell$-local DED set. In addition, for each DED set that is already $\ell$-local, the transformation is required to preserve the semantics of query answering. If such a transformation exists, since DED is universal for the family of OMQA-languages in which query answering is recursively enumerable, we then obtain a universal language for the family of $\mathsf{F}$-local OMQA-languages.

Let us begin with a finite set $\Sigma$ of DEDs over a relational schema $\mathscr{R}\supseteq\mathscr{D}\cup\mathscr{Q}$. To implement the desired transformation, we first show how to construct an $\ell$-local OMQA[$\mathcal{Q}$]-ontology from $\Sigma$. As a natural idea, one may expect to define the desired ontology by removing all the pairs $(D,q)$ from the original ontology (defined by $\Sigma$) where $q$ is not $\ell$-local on $D$. Unfortunately, the ontology defined above is in general not well-defined. To construct the desired ontology, the $\ell$-locality and the closure under both query conjunctions and query implications should be maintained simultaneously. 

Below we explain how to construct the ontology. We need to fix a strict linear order $\prec$ over $\mathscr{Q}$-UCQs firstly. The strict linear order is required to satisfy $p\prec q$ for all $\mathscr{Q}$-UCQs $p$ and $q$ such that $\dl p\dl<\dl q\dl$. Clearly, such an order always exists. For the given set $\Sigma$ of DEDs, let $S_\Sigma$ be the set that consists of the ordered pair $(D,q)$ if $D$ is a $\mathscr{D}$-database, $q$ is a $\mathscr{Q}$-UCQ in $\mathcal{Q}$, and the following condition holds:
\begin{equation}\label{eqn:condition}
\begin{aligned}
\hspace{-.16cm}\forall p&\hspace{-.03cm}\in\hspace{-.03cm}\mathcal{Q}[\mathscr{Q}]:\dl p\dl\le\dl\widehat{pr}(q)\dl\,\&\,\widehat{pr}(q)\vDash p\\
&\!\Longrightarrow\!
\exists A\!\subseteq\!adom(D)\text{ s.t. }|A|\!\le\!\ell(\dl p\dl)\,\&\,D|_A\!\cup\hspace{-.05cm}\Sigma\vDash p
\end{aligned}
\end{equation}
where $pr(q)$ is the set of boolean UCQs $p$ such that $p\prec q$ and $(D,p)\in S_\Sigma$, and $\widehat{pr}(q)$ denotes the conjunction of $q$ and all UCQs in $pr(q)$. Moreover, we define $O_\Sigma$ as the minimum superset of $S_\Sigma$ that is closed under query conjunctions, query implications and injective database homomorphisms.

The constructed ontology enjoys several properties which will play important roles in our proof for Theorem~\ref{thm:univ_lang_exist}.

\begin{lem}\label{lem:correctness}
If $\db{\Sigma}^\mathcal{Q}_{\mathscr{D},\mathscr{Q}}$ is $\ell$-local, then $O_\Sigma=\db{\Sigma}^\mathcal{Q}_{\mathscr{D},\mathscr{Q}}$.
\end{lem}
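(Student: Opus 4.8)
The plan is to prove both inclusions $O_\Sigma\subseteq\db{\Sigma}^{\mathcal{Q}}_{\mathscr{D},\mathscr{Q}}$ and $\db{\Sigma}^{\mathcal{Q}}_{\mathscr{D},\mathscr{Q}}\subseteq O_\Sigma$ under the hypothesis that $\db{\Sigma}^{\mathcal{Q}}_{\mathscr{D},\mathscr{Q}}$ is $\ell$-local. Throughout, write $O=\db{\Sigma}^{\mathcal{Q}}_{\mathscr{D},\mathscr{Q}}$ for brevity. The key structural fact I would lean on is that $O$ is itself an OMQA$[\mathcal{Q}]$-ontology (as noted just after the definition of $\db{\cdot}$), hence closed under query conjunctions, query implications, and injective database homomorphisms; and, since we are assuming it is $\ell$-local, for every pair $(D,q)\in O$ there is a small witness $A\subseteq adom(D)$ with $|A|\le\ell(\dl q\dl)$ and $(D|_A,q)\in O$, i.e. $D|_A\cup\Sigma\vDash q$.

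For the inclusion $\db{\Sigma}^{\mathcal{Q}}_{\mathscr{D},\mathscr{Q}}\subseteq O_\Sigma$, since $O_\Sigma$ is by construction the closure of $S_\Sigma$ under the three operations and $O$ enjoys exactly those closure properties, it suffices to show $S_\Sigma\subseteq O$ and that $O$ contains enough of $S_\Sigma$ to generate all of $O$. More precisely, I would first argue that every $(D,q)\in O$ already lies in $S_\Sigma$: given $(D,q)\in O$, I must verify condition~\eqref{eqn:condition}. Take any $p\in\mathcal{Q}[\mathscr{Q}]$ with $\dl p\dl\le\dl\widehat{pr}(q)\dl$ and $\widehat{pr}(q)\vDash p$. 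The conjunct $\widehat{pr}(q)$ collects $q$ together with smaller-query members of $pr(q)$, all of which (by an induction on $\prec$) lie in $O$; by closure under conjunction $\widehat{pr}(q)\in O$ on $D$, and by closure under implication $(D,p)\in O$, so $\ell$-locality of $O$ supplies the required $A$ with $|A|\le\ell(\dl p\dl)$ and $D|_A\cup\Sigma\vDash p$. This gives $(D,q)\in S_\Sigma$, hence $O\subseteq S_\Sigma\subseteq O_\Sigma$.

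For the reverse inclusion $O_\Sigma\subseteq O$, since $O$ is closed under the three generating operations, it is enough to show $S_\Sigma\subseteq O$. So fix $(D,q)\in S_\Sigma$ and aim to prove $D\cup\Sigma\vDash q$. The natural route is to instantiate condition~\eqref{eqn:condition} at a well-chosen $p$. Taking $p=q$ (which trivially satisfies $\widehat{pr}(q)\vDash q$ and the size bound) yields a set $A\subseteq adom(D)$ with $|A|\le\ell(\dl q\dl)$ and $D|_A\cup\Sigma\vDash q$; since $D|_A\subseteq D$, monotonicity of entailment gives $D\cup\Sigma\vDash q$, i.e. $(D,q)\in O$. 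Combining the two inclusions yields $O_\Sigma=O$.

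The step I expect to be the main obstacle is the well-foundedness and coherence of the definition of $S_\Sigma$ via $pr(q)$: because $pr(q)$ is defined in terms of membership in $S_\Sigma$ of $\prec$-smaller queries, one must confirm this recursion is legitimate, and the requirement $p\prec q$ whenever $\dl p\dl<\dl q\dl$ (together with $\dl p\wedge q\dl\ge\dl p\dl+\dl q\dl$) is exactly what guarantees that the conjuncts comprising $\widehat{pr}(q)$ are all strictly $\prec$-below $q$, so the induction is sound. The subtle point is verifying that in the forward inclusion the conjunction $\widehat{pr}(q)$, which may involve arbitrarily many members of $pr(q)$, still lands in $O$ on the database $D$; this is where I must carefully use that each member of $pr(q)$ satisfies $(D,p)\in S_\Sigma$ and, by the induction hypothesis, $(D,p)\in O$, so that finitely many applications of conjunction-closure suffice for any concrete $\widehat{pr}(q)$ appearing in an instance of~\eqref{eqn:condition}.
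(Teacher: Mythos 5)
Your argument is correct, and it supplies exactly the content that the paper's own proof omits --- the paper disposes of this lemma with the single line ``Follows from the definition of $O_\Sigma$,'' so there is no competing approach to compare against; yours is the intended unpacking. Two small points of hygiene. First, as written your forward inclusion $\db{\Sigma}^{\mathcal{Q}}_{\mathscr{D},\mathscr{Q}}\subseteq S_\Sigma$ quietly uses the reverse one (the members of $pr(q)$ are only known to lie in $S_\Sigma$, and you need them in $\db{\Sigma}^{\mathcal{Q}}_{\mathscr{D},\mathscr{Q}}$ to conclude that $D\cup\Sigma$ entails $\widehat{pr}(q)$); this is not circular, because $S_\Sigma\subseteq\db{\Sigma}^{\mathcal{Q}}_{\mathscr{D},\mathscr{Q}}$ is established unconditionally by instantiating Condition~(\ref{eqn:condition}) at $p=q$, so you should simply prove that inclusion first and then no induction on $\prec$ is needed at all. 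Second, ``closure under query conjunctions'' in the definition of an OMQA[$\mathcal{Q}$]-ontology only applies when the conjunction again lies in $\mathcal{Q}$, and $\widehat{pr}(q)$ need not (e.g.\ a conjunction of acyclic CQs need not be acyclic); for the concrete ontology $\db{\Sigma}^{\mathcal{Q}}_{\mathscr{D},\mathscr{Q}}$ this is harmless, since one argues semantically that $D\cup\Sigma$ entails every conjunct of $\widehat{pr}(q)$, hence entails $\widehat{pr}(q)$, hence entails $p$, and $p\in\mathcal{Q}[\mathscr{Q}]$ gives $(D,p)\in\db{\Sigma}^{\mathcal{Q}}_{\mathscr{D},\mathscr{Q}}$ --- but the abstract closure property should not be cited for a query outside $\mathcal{Q}$.
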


\begin{proof}
Follows from the definition of $O_\Sigma$.
\end{proof}

\begin{lem}\label{lem:locality}
$O_\Sigma$ is an $\ell$-local OMQA[$\mathcal{Q}$]-ontology.
\end{lem}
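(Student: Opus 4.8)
The plan is to treat the lemma's two assertions separately: that $O_\Sigma$ is an OMQA[$\mathcal{Q}$]-ontology, and that it is $\ell$-local. The first is essentially built into the construction. Since $O_\Sigma$ is by definition the minimum superset of $S_\Sigma$ closed under query conjunctions, query implications and injective database homomorphisms, it satisfies the three closure conditions of an OMQA[$\mathcal{Q}$]-ontology by fiat; the only thing left to check is that it is a genuine quasi-OMQA[$\mathcal{Q}$]-ontology, i.e.\ that every pair it contains is of the form $(D,q)$ with $q\in\mathcal{Q}[\mathscr{Q}]$ and $\mathit{const}(q)\subseteq\mathit{adom}(D)$. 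This holds for the members of $S_\Sigma$ by definition, and each of the three generating operations preserves this side condition, so it passes to the closure. I would dispose of this part in one or two lines.

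The substance is $\ell$-locality, for which I would first record two easy reductions and then isolate the hard core. First, for any $A$ the inclusion $D|_A\hookrightarrow D$ is an injective $\mathit{const}(q)$-homomorphism, so closure under injective database homomorphisms gives the implication $(D|_A,q)\in O_\Sigma\Rightarrow(D,q)\in O_\Sigma$ for free; hence the ``if'' direction of the locality equivalence costs nothing, and only the ``only if'' direction constrains the choice of $A$. Second, taking the contrapositive, if $(D,q)\notin O_\Sigma$ then $(D|_A,q)\notin O_\Sigma$ for every $A$, so in this case $A=\emptyset$ already works. Thus everything reduces to the single claim: whenever $(D,q)\in O_\Sigma$ there is $A\subseteq\mathit{adom}(D)$ with $|A|\le\ell(\dl q\dl)$ and $(D|_A,q)\in O_\Sigma$.

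To attack this claim I would first establish the containment $S_\Sigma\subseteq O_\Sigma\subseteq\db{\Sigma}^{\mathcal{Q}}_{\mathscr{D},\mathscr{Q}}$: the left inclusion is by definition, and for the right inclusion one checks that $\db{\Sigma}^{\mathcal{Q}}_{\mathscr{D},\mathscr{Q}}$ already contains $S_\Sigma$ (instantiate condition~(\ref{eqn:condition}) at $p=q$, which is legitimate because $\widehat{pr}(q)\vDash q$ and $\dl q\dl\le\dl\widehat{pr}(q)\dl$, obtaining $D|_A$ with $D|_A\cup\Sigma\vDash q$ and then $D\cup\Sigma\vDash q$ by inclusion) and that $\db{\Sigma}^{\mathcal{Q}}_{\mathscr{D},\mathscr{Q}}$ is itself an OMQA[$\mathcal{Q}$]-ontology, hence closed under the three generating operations, so the minimal closure $O_\Sigma$ lands inside it. I would then prove the bounded-witness characterization ``$(D,q)\in O_\Sigma$ iff $(D|_A,q)\in S_\Sigma$ for some $A$ with $|A|\le\ell(\dl q\dl)$'' by well-founded induction along $\prec$ (which is well-founded because $\prec$ refines the positive-integer measure $\dl\cdot\dl$, and for each $q$ the relevant set $pr(q)$ is finite). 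The governing idea is that condition~(\ref{eqn:condition}), satisfied by the $S_\Sigma$-element generating $(D,q)$, already supplies for the \emph{target} query a single witness of size at most $\ell(\dl q\dl)$ with $D|_A\cup\Sigma\vDash q$; crucially this is a \emph{per-query} bound, so I never take a union of witnesses over the conjuncts of $\widehat{pr}(q)$, which would blow the bound past $\ell(\dl q\dl)$ since bound functions need not be superadditive. It then remains to upgrade $(D|_A,q)\in\db{\Sigma}^{\mathcal{Q}}_{\mathscr{D},\mathscr{Q}}$ to $(D|_A,q)\in S_\Sigma$ by verifying~(\ref{eqn:condition}) for the restricted database, discharging each smaller query $p\prec q$ arising in $\widehat{pr}(q)$ via the induction hypothesis.

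I expect this last step---closing the induction---to be the main obstacle. The difficulty is that $pr(q)$, and hence $\widehat{pr}(q)$, are recomputed relative to the smaller database $D|_A$, and I must show that this recomputation neither introduces conjuncts whose witnesses escape $A$ nor enlarges the set of queries that~(\ref{eqn:condition}) must certify. Both are controlled precisely by the fact that $\prec$ respects $\dl\cdot\dl$ together with the superadditivity $\dl p\wedge q\dl\ge\dl p\dl+\dl q\dl$, which forces every conjunct of $\widehat{pr}(q)$ to be $\prec q$ (so the induction hypothesis applies to it) while keeping all required witnesses inside a set of size $\ell(\dl q\dl)$. Managing this bookkeeping---matching $\widehat{pr}(q)$ over $D$ against $\widehat{pr}(q)$ over $D|_A$ and invoking the induction hypothesis to keep $S_\Sigma$-membership stable under restriction---is where the real work lies; the remaining verifications (the closure side conditions and the trivial ``if'' direction) are routine.
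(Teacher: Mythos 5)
Your opening reductions are fine, and you have correctly isolated the one nontrivial obligation: given $(D,q)\in O_\Sigma$, produce $A\subseteq adom(D)$ with $|A|\le\ell(\dl q\dl)$ and $(D|_A,q)\in O_\Sigma$. But your plan for discharging it rests on the bounded-witness characterization ``$(D,q)\in O_\Sigma$ iff $(D|_A,q)\in S_\Sigma$ for some $A$ with $|A|\le\ell(\dl q\dl)$'', and the hard direction of that claim is exactly the part you leave open: you yourself identify the re-verification of Condition~(\ref{eqn:condition}) over the restricted database $D|_A$ as ``where the real work lies'' without carrying it out. This is a genuine gap, not bookkeeping. Membership in $S_\Sigma$ is defined by a condition that quantifies over all queries $p$ with $\dl p\dl\le\dl\widehat{pr}(q)\dl$, where $pr(q)$ is itself recomputed relative to the database at hand; there is no evident monotonicity in either direction between $(D,p)\in S_\Sigma$ and $(D|_A,p)\in S_\Sigma$, so the induction along $\prec$ does not obviously close, and you give no argument that it does. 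Demanding $(D|_A,q)\in S_\Sigma$ is also strictly more than the lemma needs: $(D|_A,q)\in O_\Sigma$ would suffice.

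The paper's proof avoids this difficulty by never restricting the database inside $S_\Sigma$. It first shows (by induction on the construction of $O_\Sigma$) that $(D,q)\in O_\Sigma$ yields generators $p_1\prec\cdots\prec p_k$ with $(D,p_i)\in S_\Sigma$ over the \emph{original} $D$ and $p:=p_1\wedge\cdots\wedge p_k\vDash q$. Applying Condition~(\ref{eqn:condition}) to each $(D,p_i)$ at the query $p_i$ itself gives witnesses $A_i$ whose union $A$ satisfies $|A|\le\ell(\dl p\dl)$ and $D|_A\cup\Sigma\vDash p$; if $\dl p\dl\le\dl q\dl$ this already gives the bound, and otherwise $\dl\widehat{pr}(p_k)\dl\ge\dl p\dl>\dl q\dl$ together with $\widehat{pr}(p_k)\vDash q$ lets one invoke Condition~(\ref{eqn:condition}) for $(D,p_k)$ with target query $q$ directly, producing a single witness $B$ with $|B|\le\ell(\dl q\dl)$. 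That second case is precisely the ``per-query bound'' you were reaching for, but it is extracted from an $S_\Sigma$-fact about the \emph{unrestricted} database, so no recomputation of $pr(\cdot)$ over $D|_A$ is ever needed. (Incidentally, your observation that unions of witnesses threaten the bound when $\ell$ is not superadditive is a fair criticism of the first case of the paper's argument, which does use $\sum_i\ell(\dl p_i\dl)\le\ell(\sum_i\dl p_i\dl)$; but noticing a difficulty is not the same as resolving it, and your alternative route leaves its own central step unproved.)
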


\begin{proof}
Let $q$ be a UCQ $q$ with $(D,q)\in O_\Sigma$. We first claim that there is an integer $k\ge 1$ and UCQs $p_1,\dots,p_k$ such that $(D,p_i)\in S_\Sigma$ for each $p_i$ and $p_1\wedge\cdots\wedge p_k\vDash q$. This can be proved by a routine induction on the construction of $O_\Sigma$.

With the claim, w.l.o.g., we assume $p_1\prec p_2\prec\cdots\prec p_k$. Let $p$ denote the query $p_1\wedge\cdots\wedge p_k$. Obviously, it holds that $\widehat{pr}(p_k)\vDash p$, which implies that $\widehat{pr}(p_k)\vDash q$ immediately. On the other hand, by definition we know that $p_i$ is $\ell$-local on $D$, i.e., there is a set $A_i\subseteq adom(D)$ such that $D|_{A_i}\cup\Sigma\vDash p_i$. Let $A=A_1\cup\cdots\cup A_k$. We then have $D|_{A}\cup\Sigma\vDash p$ and 
$|A|\le\ell(\dl p_1\dl)+\cdots+\ell(\dl p_k\dl)\le\ell(\dl p_1\dl+\cdots+\dl p_k\dl)\le\ell(\dl p\dl)$.
Consequently, we obtain that $p$ is $\ell$-local on $D$. 

Now, it remains to show that $q$ is $\ell$-local on $D$. For the case where $\dl p\dl\le\dl q\dl$, from $p\vDash q$ and $D|_A\cup\Sigma\vDash p$, we obtain that $D|_A\cup\Sigma\vDash q$, which implies that $q$ is $\ell$-local on $D$. For the other case, it must be true that $\dl p\dl>\dl q\dl$. From the fact that $\dl\widehat{pr}(p_k)\dl\ge\dl p\dl$, we know that $\dl\widehat{pr}(p_k)\dl>\dl q\dl$. Since $(D,p_k)\in S_\Sigma$ is true, by Condition~(\ref{eqn:condition}) we know that there is a set $B\subseteq adom(D)$ such that $D|_B\cup\Sigma\vDash q$, which means that $q$ is $\ell$-local on $D$. This then completes the proof.
\end{proof}

\begin{lem}
$O_\Sigma$ is recursively enumerable. 
\end{lem}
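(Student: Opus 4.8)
The plan is to reduce the statement to the recursive enumerability of the seed set $S_\Sigma$, and then argue that passing to the closure preserves this property. By construction $O_\Sigma$ is the least superset of $S_\Sigma$ closed under query conjunction, query implication and injective database homomorphism, and each of these generating operations is effective: conjunction has the decidable applicability test $q\wedge p\in\mathcal{Q}$ and a computable output; for implication one enumerates candidate targets $p\in\mathcal{Q}[\mathscr{Q}]$ (which is enumerable, $\mathcal{Q}$ being decidable) and applies the decidable UCQ-containment test $q\vDash p$; and for homomorphisms one enumerates candidate databases $D'$ and applies the decidable test $D\twoheadrightarrow_{const(q)}D'$ between finite databases. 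Since the closure of a recursively enumerable set under finitely many operations of this kind is recursively enumerable (a standard dovetailing saturation), it suffices to show that $S_\Sigma$ is r.e.; the generator characterisation already isolated in the proof of Lemma~\ref{lem:locality} confirms that $O_\Sigma$-membership is exactly an existential search over finite witnesses $(D,p_1),\dots,(D,p_k)\in S_\Sigma$ with $p_1\wedge\cdots\wedge p_k\vDash q$.

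For $S_\Sigma$ I would first assemble the following toolkit, under the temporary assumption that $\widehat{pr}(q)$ is available as an explicit query. First, certain-answer entailment $D|_A\cup\Sigma\vDash p$ is recursively enumerable, being equivalent to unsatisfiability of the first-order theory $D|_A\cup\Sigma\cup\{\neg p\}$, which is r.e. by completeness of first-order logic. Second, as $adom(D)$ is finite, the clause $\exists A\subseteq adom(D)$ with $|A|\le\ell(\dl p\dl)$ is a finite disjunction, so the consequent of Condition~(\ref{eqn:condition}) is r.e. Third, for a fixed pair $(D,q)$ there are, up to renaming of bound variables, only finitely many $\mathscr{Q}$-UCQs $p$ with $\dl p\dl\le\dl\widehat{pr}(q)\dl$ and $const(p)\subseteq adom(D)$, since $\mathscr{Q}$ is finite, the size is bounded, and the admissible constants are confined to the finite set $adom(D)$; hence the quantifier $\forall p$ in Condition~(\ref{eqn:condition}) ranges over a finite, effectively listable set, and its antecedent (a size comparison together with the decidable test $\widehat{pr}(q)\vDash p$) is decidable. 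Granting a computable $\widehat{pr}(q)$, Condition~(\ref{eqn:condition}) is therefore a finite conjunction of implications of the form \emph{decidable antecedent} $\Rightarrow$ \emph{r.e.\ consequent}, and so is r.e.

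The remaining and genuinely delicate step is to produce $\widehat{pr}(q)$ effectively, and this is where I expect the main obstacle. Because $\prec$ refines the size order I may fix it to be a well-order, so that $pr(q)$ is defined by well-founded recursion on $\prec$ over the finitely many admissible UCQs $p\prec q$ identified above. The difficulty is that forming $\widehat{pr}(q)$ requires the \emph{exact} $S_\Sigma$-status of each such $p$: confirming $p\in S_\Sigma$ is r.e.\ by the preceding paragraph, but confirming $p\notin S_\Sigma$ is only co-r.e.\ in general, since by the first toolkit item it ultimately rests on \emph{non}-entailments of the undecidable certain-answer problem. A naive dovetailing that conjoins only the smaller members confirmed so far produces an under-approximation of $pr(q)$, hence a weaker $\widehat{pr}(q)$ and an easier instance of Condition~(\ref{eqn:condition}), risking unsound acceptances; conjoining all smaller candidates instead over-strengthens $\widehat{pr}(q)$ and risks spurious rejections. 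The hard part will thus be to show that this negative information is inessential. My plan is to prove, by induction along $\prec$, a stability lemma to the effect that the verdict $(D,q)\in S_\Sigma$ depends only on \emph{positive} facts about smaller queries: one processes each $p\prec q$ before $q$, uses the well-ordering to keep every level finite, and verifies that any query $p'$ witnessing an exclusion $p\notin S_\Sigma$ reappears as an admissible test query at level $q$ and therefore leaves the verdict unchanged. If this stability holds, $\widehat{pr}(q)$ is computed using only r.e.\ sub-computations, Condition~(\ref{eqn:condition}) is r.e.\ for every $q$, $S_\Sigma$ is r.e., and the reduction of the first paragraph yields that $O_\Sigma$ is recursively enumerable.
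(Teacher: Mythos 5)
Your first two paragraphs are, in substance, the paper's own proof: the paper literally lists the six facts you elaborate (r.e.-ness of first-order entailment, decidability of UCQ containment, finiteness of the candidate queries $p$ and of the subsets $A$ of $adom(D)$, computability of $\ell$ and $\dl\cdot\dl$, decidability of $\mathcal{Q}$), and your expansion of them is sound \emph{conditional on $\widehat{pr}(q)$ being given}. The real content is in your third paragraph, and there you have correctly put your finger on the step the paper's proof does not address at all: Condition~(\ref{eqn:condition}) is a recursion along $\prec$, and evaluating it at $q$ needs the \emph{exact} set $pr(q)$, hence negative as well as positive membership information about the $\prec$-smaller queries, whereas membership in $S_\Sigma$ is (by your own analysis) only semi-decidable, its complement resting on co-r.e.\ non-entailments $D|_A\cup\Sigma\nvDash p$. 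The problem is that your proposal stops exactly there: the ``stability lemma'' is announced as a plan, not proved, and as stated it is doubtful. Condition~(\ref{eqn:condition}) is antitone in the set substituted for $pr(q)$ (enlarging it strengthens $\widehat{pr}(q)$ and loosens the size bound $\dl\widehat{pr}(q)\dl$, so strictly more test queries $p$ fall under the universal quantifier). Wrongly adding a single $p_0\prec q$ with $(D,p_0)\notin S_\Sigma$ therefore makes $p_0$ itself an admissible test query at level $q$; if $p_0$ lacks a small witness set $A$, the verdict at $q$ flips from accept to reject even though the true condition, computed with the exact $pr(q)$, accepts. So the reappearance of failure witnesses at level $q$ does \emph{not} ``leave the verdict unchanged''; it is precisely the mechanism by which over-approximation changes it, and under-approximation fails symmetrically, as you yourself note.

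To close the gap you would need either a proof that, along the finitely many relevant $\prec$-predecessors of $q$, the bits $[(D,p)\in S_\Sigma]$ can be recovered from positive information alone, or a reformulation of $S_\Sigma$ as an existential quantification over candidate sets $P$ for $pr(q)$ together with r.e.-verifiable side conditions forcing $P=pr(q)$. Neither appears in your write-up (nor, for that matter, in the paper's six-item proof, which is silent on the recursion through $pr(q)$). As it stands your proposal is an accurate diagnosis of the difficulty plus an unproven and, in its current formulation, likely false conjecture, so it does not yet establish the lemma.
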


\begin{proof}
The lemma is a corollary of the following facts:
(1) The validity problem for inference in first-order logic is recursively enumerable;
(2) The query containment problem for UCQs is decidable;
(3) There are only a finite number of boolean $\mathscr{Q}$-UCQs $p$ with $\dl p\dl\le\dl\widehat{pr}(q)\dl$; 
(4) There are only a finite number of subsets of $adom(D)$.
(5) Both $\ell$ and $\dl\cdot\dl$ are computable.
(6) $\mathcal{Q}$ is decidable.
\end{proof}

Now, to define the transformation, it remains to show how to encode the ontology $O_\Sigma$ by another set of DEDs. Suppose $D$ is the underlying database, and $q$ the underlying query. The encoding will be implemented in the following way: 
\begin{enumerate}
\item Simulate the query answering of $q$ under $O_\Sigma$ and $D$.
\item If the answer of Stage 1 is positive, then nondeterministically copy disjuncts of $q$ to generate the universal models. 
\end{enumerate}

The main challenges of implementing the above encoding are as follows. Firstly, instead of a single universal model, we need to generate a set of universal models in Stage 2. It is not clear whether the technique of generating universal model in~\cite{ZhangZY16} can be applied to this situation. Secondly, to encode the computation in Stage 1, a successor relation is needed. But it seems  impossible to define such a relation in the language of DEDs explicitly. 

Below we explain how to implement the encoding.

\medskip
{\noindent\bf Defining Successor and Arithmetic Relations.\,\,} 
To implement the desired encoding, a successor relation needs to be defined so that the constants in the underlying database $D$ can be ranged over. 
As there is no negation in the body of DEDs, it seems impossible to construct DEDs to traverse ALL constants in $adom(D)$. Fortunately, thanks to the closure of OMQA-ontologies under injective database homomorphisms, we do not need a successor relation on the full domain. The reason is as follows. Suppose we want to show that a query $q$ is derivable from the database $D$ under some ontology $O$. As $O$ is closed under injective database homomorphisms,  it is equivalent to show {\em whether there is a subset $A$ of $adom(D)$ such that $q$ is derivable from $D|_A$ under $O$.} 

To range over subsets of $adom(D)$, we employ the {\em partial successor relations} on $adom(D)$, each of which is a successor relation on some subset of $adom(D)$. Clearly, there is a partial successor relation for each subset $A$ of $adom(D)$. With the mentioned property, we will define some DEDs to generate partial successor relations on $adom(D)$. To check whether $q$ is derivable from $D$ under $O$, it would be sufficient to test whether the computation of Stage 1 halts with ``accept" under a certain partial successor relation.

Our method to generate partial successor relations was inspired by~\citeauthor{RudolphT2015}'s technique to define successor relations in the language of TGDs~\shortcite{RudolphT2015}. In that paper they showed that every homomorphism-closed database query can be defined by a set of TGDs. It is worth noting that the ontology mediated queries focused on this paper are not necessary to be closed under homomorphisms. So their technique cannot be applied directly. 
%
%
Fortunately, a linear order on $adom(D)$ can be easily defined by a set of DEDs, and with this order, we are able to use their idea to generate all partial successor relations that are compatible with the defined order. Now we show how to implement this idea.

Let $\textit{AD}$ be a unary relation symbol that will be interpreted as $adom(D)$. Clearly, such a relation can be easily defined by some DEDs.
With the relation $\textit{AD}$, a linear order relation $\textit{Less}$ over $\textit{AD}$ can then be defined in a routine way: 
\begin{eqnarray}
\label{eq:succ_guess} \textit{AD}(x)\wedge\textit{AD}(y)\rightarrow\textit{Less}(x,y)\vee x=y\vee\textit{Less}(y,x)\\
\textit{Less}(x,y)\wedge\textit{Less}(y,z)\rightarrow\textit{Less}(x,z)\\
\label{eq:succ_undesired}\textit{Less}(x,x)\rightarrow\bot
\end{eqnarray}

To generate all partial successor relations compatible with $\textit{Less}$, we link each constant $c$ in $adom(D)$ with a {\em alias} $a$ by the relation $\textit{Link}(c,a)$. Suppose $a_1$ and $a_2$ are aliases of constants $c_1$ and $c_2$ respectively, by $\textit{Next}(a_1,a_2)$ we mean that $c_2$ is the immediate successor of $c_1$ in the underlying successor relation. The head (resp., the tail) of a partial successor relation is denoted by $\textit{First}(a)$ (resp., $\textit{Last}(b)$). In particular, we use $a$ as the {\em name} of the underlying relation. Every partial successor relation is required to have a head and a tail. To generate these relations, we use the following DEDs:
\begin{eqnarray}
\label{eqn:succ_1}\textit{AD}(x)\rightarrow\exists v\,\textit{Link}(x,v)\wedge\textit{Last}(v)\wedge\textit{First}(v)\\
\label{eqn:succ_1}\textit{AD}(x)\rightarrow\exists v\,\textit{Link}(x,v)\wedge\textit{Last}(v)\wedge\textit{Partial}(v)\\
\label{eqn:succ_3}\begin{aligned}
\textit{Less}(x,y&)\wedge\textit{Link}(y,v)\wedge\textit{Partial}(v)\\
&\rightarrow\,\exists u\,\textit{Link}(x,u)\wedge\textit{Next}(u,v)\wedge\textit{First}(u)
\end{aligned}
\\
\label{eqn:succ_4}\begin{aligned}
\textit{Less}(x,y&)\wedge\textit{Link}(y,v)\wedge\textit{Partial}(v)\\
&\rightarrow\,\exists u\,\textit{Link}(x,u)\wedge\textit{Next}(u,v)\wedge\textit{Partial}(u)
\end{aligned}\end{eqnarray}
To understand how these DEDs work in more detail, please refer to the following example.
\begin{figure}[ht]
	\centering
	\includegraphics[width=.95\columnwidth]{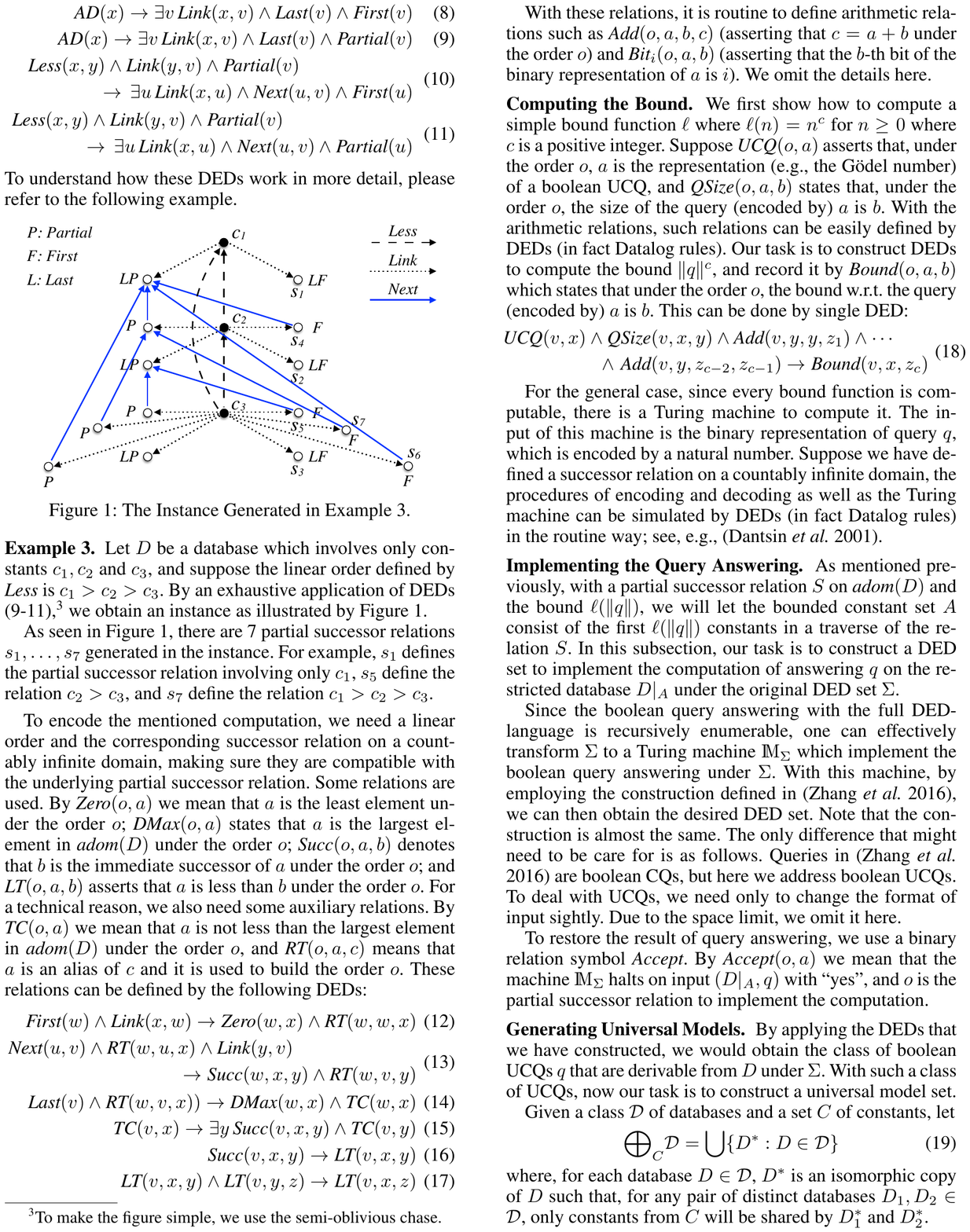}
	\caption{The Instance Generated in Example~\ref{exm:succ}.}\label{fig:succ}
\end{figure}

\begin{exm}\label{exm:succ}
Let $D$ be a database which involves only constants $c_1,c_2$ and $c_3$, and suppose the linear order defined by $\textit{Less}$ is $c_1>c_2>c_3$. By an exhaustive application of DEDs (\ref{eqn:succ_1}-\ref{eqn:succ_4}),\footnote{To make the figure simple, we use the semi-oblivious chase.} we obtain an instance as illustrated by Figure~\ref{fig:succ}.

As seen in Figure~\ref{fig:succ}, there are 7 partial successor relations $s_1,\dots,s_7$ generated in the instance. For instance, $s_1$ defines the partial successor relation involving only $c_1$; $s_5$ defines the relation $c_2>c_3$; $s_7$ defines the relation $c_1>c_2>c_3$.
\end{exm}

To encode Stage 1 mentioned before, we need to generate a linear order and the corresponding successor relation on a countably infinite domain, making sure they are compatible with the underlying partial successor relation on $adom(D)$.
More relations are needed to do this. $\textit{Zero}(o,a)$ means that $a$ is the least element under the order $o$; $\textit{DMax}(o,a)$ states that $a$ is the largest element in $adom(D)$ under the order $o$; $\textit{Succ}(o,a,b)$ denotes that $b$ is the immediate successor of $a$ under the order $o$; and $\textit{LT}(o,a,b)$ asserts that $a$ is less than $b$ under the order $o$. For a technical reason, we also need some auxiliary relations. $\textit{TC}(o,a)$ denotes that $a$ is not less than the largest element in $adom(D)$ under the order $o$, and $\textit{RT}(o,a,c)$ means that $a$ is an alias of $c$ and it is used to build the order $o$.
All of these are defined by the following DEDs:
\begin{eqnarray}
\textit{First}(w)\wedge\textit{Link}(x,w)\rightarrow\textit{Zero}(w,x)\wedge\textit{RT}(w,w,x)\\
\begin{aligned}
\textit{Next}(u,v)\wedge\textit{RT}(w,u,\hspace{0.06cm}&x)\wedge\textit{Link}(y,v)\\
&\rightarrow\textit{Succ}(w,x,y)\wedge\textit{RT}(w,v,y)
\end{aligned}\\
\textit{Last}(v)\wedge\textit{RT}(w,v,x))\rightarrow\textit{DMax}(w,x)\wedge\textit{TC}(w,x)\\
\textit{TC}(v,x)\rightarrow\exists z\,\textit{Succ}(v,x,y)\wedge\textit{TC}(v,y)\\
\textit{Succ}(v,x,y)\rightarrow\textit{LT}(v,x,y)\\
\textit{LT}(v,x,y)\wedge\textit{LT}(v,y,z)\rightarrow\textit{LT}(v,x,z)
\end{eqnarray}

With these relations, it is routine to define arithmetic relations such as $\textit{Add}(o,a,b,c)$ (asserting that $c=a+b$ under the order $o$) and $\textit{Bit}_i(o,a,b)$ (asserting that the $b$-th bit of the binary representation of $a$ is $i$). We omit the details here. 

\medskip
{\noindent\bf Simulating Query Answering under $O_\Sigma$.\,\,} With a partial successor relation and the related arithmetic relations, we are now in the position to define some DEDs to simulate the query answering of $q$ under $O_\Sigma$ and $D$.  

Our encoding that implements the simulation of query answering is almost the same as that in Section 5.3 of~\cite{ZhangZY16}. 
As proved by~\citeauthor{ZhangZY16} (see Proposition 6 of~\shortcite{ZhangZY16}), all recursively enumerable OMQA-ontologies can be recognized by a certain class of Turing machines, called convergent $2$-bounded nondeterministic Turing machines. Although the queries involved in that work are only boolean CQs, by a similar argument one can show that the result can be generalized to the case where boolean UCQs are involved. The only difference is that, to deal with UCQs, we have to change the format of input slightly. Due to the space limit, we omit the details here.

With the result mentioned above, we can then find a convergent $2$-bounded nondeterministic Turing machine $\mathds{M}_\Sigma$ to recognize $O_\Sigma$. By employing the DEDs defined in Section 5.3 of~\cite{ZhangZY16} (with a slight modification to specify the partial successor relation), we are then able to simulate the computation of $\mathds{M}_\Sigma$ on the input $(D,q)$. 

To restore the result of the query answering, we use a binary relation symbol $\textit{Accept}$. By $\textit{Accept}(o,a)$ we mean that the machine $\mathds{M}_\Sigma$ halts on input $(D,q)$ with ``accept", and $o$ is the partial successor relation to implement the computation.  

\medskip
{\noindent\bf Generating Universal Models.\,\,} By applying all the DEDs that we have constructed, we will obtain the class of boolean UCQs that are derivable from $D$ under $O_\Sigma$. With such a class of UCQs, now our task is to construct a universal model set. 

Given a class $\mathcal{D}$ of databases and a set $C$ of constants, let 
\begin{equation*}
{\bigoplus}_C\mathcal{D}=\bigcup\{D^\ast:D\in\mathcal{D}\}
\end{equation*} where, for each database $D\in\mathcal{D}$, $D^\ast$ is an isomorphic copy of $D$ such that, for any pair of distinct databases $D_1,D_2\in\mathcal{D}$, only constants from $C$ will be shared by $D^\ast_1$ and $D^\ast_2$. 

Let $D$ be a $\mathscr{D}$-database, and $O$ an OMQA[UCQ]-ontology over $(\mathscr{D},\mathscr{Q})$. Given a boolean UCQ $q$, let $\mathcal{D}_q$ denote the set consisting of $[p]$ for each disjunct (a boolean CQ) of $q$. Let 
\begin{equation*}
\Gamma(O,D)=\{\mathcal{D}_q: (D,q)\in O\}.
\end{equation*} 
Let $\mathcal{U}(O,D)$ denote the set that consists of 
$
{\bigoplus}_{C}H
$
for each minimum hitting set $H$ of $\Gamma(O,D)$, where $C=adom(D)$.

\begin{prop}\label{prop:query2um}
Let $O$ be an OMQA[UCQ]-ontology over $(\mathscr{D},\mathscr{Q})$, $D$ a $\mathscr{D}$-database, and $q$ a boolean $\mathscr{Q}$-UCQ such that $\textit{const}(q)\subseteq\textit{adom}(D)$. Then $(D,q)\in O$ iff $I\models q$ for all instances $I\in \mathcal{U}(O,D)$.
\end{prop}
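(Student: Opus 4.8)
The plan is to prove the two directions separately, fixing $C=adom(D)$ and writing $O_D=\{q':(D,q')\in O\}$ for the set of $\mathscr{Q}$-UCQs entailed from $D$. The single combinatorial fact driving the whole argument is that, because every entailed UCQ satisfies $const(q')\subseteq adom(D)=C$, the canonical databases chosen by a hitting set share only constants in $C$; hence for any boolean CQs $r_1,\dots,r_n$ whose constants lie in $C$ we have $\bigoplus_C\{[r_1],\dots,[r_n]\}=[r_1\wedge\cdots\wedge r_n]$, the canonical database of their conjunction. Consequently $\bigoplus_C\{[r_1],\dots,[r_n]\}\models q$ holds exactly when $r_1\wedge\cdots\wedge r_n\vDash q$. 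I would establish this identity first, since it converts ``the glued union models $q$'' into a containment between CQs that the closure properties of $O$ can control.

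For the ``only if'' direction, suppose $(D,q)\in O$ and let $H$ be any minimal hitting set of $\Gamma(O,D)$. Since $\mathcal{D}_q\in\Gamma(O,D)$ and $H$ meets it, some disjunct $p$ of $q$ has $[p]\in H$. In $\bigoplus_C H$ the corresponding copy of $[p]$ still satisfies $p$ (its constants, lying in $const(q)\subseteq C$, are preserved), so $\bigoplus_C H\models p$ and therefore $\bigoplus_C H\models q$. This is immediate and uses no minimality.

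The ``if'' direction is the substantial one, and I would argue the contrapositive: assuming $(D,q)\notin O$, construct a minimal hitting set whose glued union fails $q$. The heart is a finite lemma asserting that for every finite $F\subseteq O_D$ there is a transversal (one disjunct per UCQ of $F$) whose glued union does not satisfy $q$. To prove it, suppose otherwise; by the identity above, every transversal corresponds to a disjunct $P$ of the disjunctive normal form of $\bigwedge F$ with $P\vDash q$, so $\bigwedge F\vDash q$; but $\bigwedge F\in O_D$ by closure under query conjunctions, whence $q\in O_D$ by closure under query implications, a contradiction. With the finite lemma in hand, the good transversals over an enumeration of $O_D$ form an infinite, finitely branching tree (each UCQ has finitely many disjuncts), so K\"onig's lemma yields an infinite branch; the union of its chosen canonical databases is a hitting set $H_0$ of $\Gamma(O,D)$, and since $q$ is finite any homomorphism of $q$ into $\bigoplus_C H_0$ would already appear at a finite level, so $\bigoplus_C H_0\not\models q$.

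Finally I would pass from $H_0$ to a minimal hitting set: because each $\mathcal{D}_{q'}$ is finite, the intersection of any chain of hitting sets is again a hitting set, so Zorn's lemma provides a minimal $H\subseteq H_0$; then $\bigoplus_C H\subseteq\bigoplus_C H_0$ and, $q$ being positive, $\bigoplus_C H\not\models q$, exhibiting the desired $I=\bigoplus_C H\in\mathcal{U}(O,D)$ with $I\not\models q$. The main obstacle I anticipate is the finite lemma, specifically making precise the passage through distributive normal form and verifying the identity $\bigoplus_C\{[r_i]\}=[\bigwedge_i r_i]$ in the case where the disjuncts of $q$ become disconnected after deleting the constants in $C$, so that satisfaction of $q$ genuinely mixes several chosen copies; the remaining infinitary bookkeeping (K\"onig for a full branch, Zorn with finiteness of each $\mathcal{D}_{q'}$ for minimization, and finiteness of $q$ at the limit) is routine once that identity is secured.
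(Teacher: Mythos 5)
Your proof is correct, but it is organized around different machinery than the paper's. The paper reduces the statement to the equivalence ``$(D,q)\in O$ iff $\Lambda\vDash q$'', where $\Lambda$ is the set of all UCQs entailed from $D$: one direction of that equivalence uses first-order compactness plus the closure of $O$ under query conjunctions and implications, and the link to $\mathcal{U}(O,D)$ is made by taking an arbitrary model $J$ of $\Lambda$ and exhibiting an $adom(D)$-homomorphism from a glued union into $J$. You never leave the world of canonical databases: your key lemma is the finite transversal lemma (no finite $F\subseteq O_D$ forces $q$ on every transversal, proved via the identity $\bigoplus_C\{[r_i]\}=[\bigwedge_i r_i]$ and the same two closure properties), and you replace FO compactness by K\"onig's lemma on the tree of good transversals, followed by Zorn to shrink to a minimal hitting set. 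The two arguments use the closure properties at the same point and are of comparable depth, but yours buys something concrete: the definition of $\mathcal{U}(O,D)$ quantifies only over \emph{minimum} hitting sets, and the paper's proof applies the hypothesis ``$I\models q$ for all $I\in\mathcal{U}(O,D)$'' to a glued union of a hitting set that need not be minimal, leaving the descent to a minimal hitting set (and the appeal to positivity of $q$) implicit; your Zorn step makes exactly this repair explicit. The identity you flag as the main obstacle is unproblematic once bound variables of distinct conjuncts are renamed apart, since the glued union along $C=adom(D)$ is then literally the canonical database of the conjunction and Chandra--Merlin applies; your worry about matches of $q$ mixing several copies is dissolved by the same observation.
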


\begin{proof}
Let $\Lambda$ denote the set of all boolean UCQs $p$ such that $(D,p)\in O$. We first prove a property as follows.

\medskip
{\noindent\em Claim.} $\Lambda\vDash q$ iff $I\models q$ for all instances $I\in\mathcal{U}(O,D)$.%

\begin{proof}
First consider the direction of ``only if". Suppose we have $\Lambda\vDash q$, and let $I$ be any instance in $\mathcal{U}(O,D)$. We need to prove that $I\models q$. According to the definition of $\mathcal{U}(O,D)$, we know that there is a minimum hitting set $H$ of $\Gamma(O,D)$ such that $I=\bigoplus_{\textit{adom}(D)}H$. This implies that for each UCQ $q_0\in\Lambda$ there is a disjunct $p$ (which is a boolean CQ) of $q_0$ such that $[p]$ has an isomorphic copy in $I$. Consequently, we have that $I\models q_0$ for all boolean UCQs $q_0\in\Lambda$. From the assumption that $\Lambda\vDash q$, we conclude $I\models q$ as desired.

Next let us turn to the direction of ``if". Suppose $I\models q$ for all instances $I\in\mathcal{U}(O,D)$. Now our task is to prove that $\Lambda\vDash q$. Let $J$ be an arbitrary instance such that $J\models p$ for all boolean UCQs $p\in\Lambda$. Take $p$ as any boolean UCQ in $\Lambda$. W.l.o.g., we write $p$ as the form $p_1\vee\cdots\vee p_n$ where each $p_i$ is a boolean CQ. Let $\varphi_p\in\{p_1,\dots,p_n\}$ be any disjunct of $p$ such that $J\models\varphi_p$. Such a disjunct always exists because $J\models p$. Suppose $\varphi_p$ is of the form $\exists\vec{x}_p\psi_p$, where $\psi_p$ is a conjunction of atoms and $\vec{x}_p$ a tuple of variables. Let $s_p$ be an assignment such that $J\models\psi_p[s_p]$. Let $H$ denote the set that consists of $[\varphi_p]$ for each UCQ $p\in\Lambda$, and let $I$ denote the instance $\bigoplus_{\textit{adom}(D)}H$. Let $h$ be a mapping that maps the isomorphic copy of $x$ in $I$ to $s_p(x)$ if $p\in\Lambda$ and $x\in\vec{x}_p$, and maps each constant in $\textit{adom}(D)$ to itself. Clearly, $h$ is an $\textit{adom}(D)$-homomorphism from $I$ to $J$. By the assumption made in the begin of this paragraph, we know $I\models q$. As $q$ is preserved under $\textit{adom}(D)$-homomorphisms, we conclude $J\models q$, which completes the proof of the claim.
\end{proof}

According to the above claim, to prove the desired proposition, it suffices to show that $(D,q)\in O$ iff $\Lambda\vDash q$. The direction of ``only if" is trivial since from $(D,q)\in O$ we already have $q\in\Lambda$. It thus remains to show the converse. Suppose $\Lambda\vDash q$. According to the compactness, there is a finite subset $\Lambda_0$ of $\Lambda$ such that $\Lambda_0\vDash q$. Let $q_0$ denote the conjunction of all UCQs in $\Lambda_0$. Obviously, $q_0$ is also a boolean UCQ. By the definition of $\Lambda$, we know that for each UCQ $q\in\Lambda_0$ we have $(D,q)\in O$. Since $O$ is closed under query conjunctions, we obtain $(D,q_0)\in O$. By $\Lambda_0\vDash q$, it also holds that $q_0\vDash q$. Furthermore, by applying the closure property of $O$ under query implications, we conclude $(D,q)\in O$, which completes the proof of the proposition immediately. 
\end{proof}

With Proposition~\ref{prop:query2um}, we are now able to construct a set of DEDs to generate $\mathcal{U}(O,D)$. Several relations are needed to access the encoding of a query. We use $\textit{UCQ}(o,a)$ to denote that, under the order $o$, $a$ is the representation (e.g., the G\"{o}del number) of a boolean UCQ, and use $\textit{Union}(o,a,b,c)$ to assert that, under the order $o$, the boolean UCQ $a$ is the disjunction of a boolean CQ $b$ and a boolean UCQ $c$. By $\textit{CQ}(o,a)$ we mean that $a$ is the encoding of some boolean CQ under the order $o$, and by $\textit{QVar}(o,a,b)$ we means that $b$ is a variable that occurs in the boolean CQ encoded by $a$ under the order $o$. Moreover, we assume that $\textit{Q}_1,\dots,\textit{Q}_n$ enumerates all the relation symbols in $\mathscr{Q}$. For $i=1,\dots,k$, let $\textit{HasQ}_i(o,a,\vec{t})$ denote that $\textit{Q}_i(\vec{t})$ is an atom in the CQ encoded by $a$. It is easy to see that all these relations can be defined by standard arithmetic relations. 

Now, we use the following DEDs to nondeterministically choose which disjunct of a boolean UCQ to be true:
\begin{eqnarray}
\textit{Accept}(v,x)\hspace{-.04cm}\rightarrow\hspace{-.04cm}\textit{True}(v,x)\\
\!\!\!\textit{True}(v,x)\hspace{-.04cm}\wedge\hspace{-.04cm}\textit{Union}(v,x,y,z)\hspace{-.04cm}\rightarrow\hspace{-.04cm}\textit{True}(v,y)\hspace{-.04cm}\vee\hspace{-.04cm}\textit{True}(v,z)%
\end{eqnarray}
where $\textit{True}(o,a)$ states that the boolean CQ encoded by $a$ under the order $o$ is chosen to be true in the intended model.

To generate a copy of a boolean CQ in the intended universal model, we employ the following DEDs:
\begin{eqnarray}
\textit{CQ}(v,x)\wedge\textit{QVar}(v,x,y)\rightarrow\exists z\,\textit{Copy}(v,x,y,z)\\
\textit{CQ}(v,x)\wedge\textit{AD}(y)\rightarrow\textit{Copy}(v,x,y,y)\\
\begin{aligned}
\textit{True}(v,x)\wedge\textit{CQ}(v,x&)\wedge\textit{HasQ}_i(v,x,\vec{y})\\
&\wedge\textit{Copy}(v,x,\vec{y},\vec{z})\rightarrow\textit{Q}_i(\vec{z})
\end{aligned}
\end{eqnarray}
where $\textit{Copy}(v,x,\vec{y},\vec{z})$ denotes $\bigwedge_{1\le j\le k}\textit{Copy}(v,x,y_j,z_j)$ if $\vec{y}=y_1\cdots y_k$, $\vec{z}=z_1\cdots z_k$, and $k$ is the arity of $\textit{Q}_i$. Intuitively, the first rule generates a copy for each variable in the CQ $q$, the second one asserts that the constant in $q$ will not change, and the third one then copy atoms in $q$ into the universal model and implement some necessary substitutions.   

\medskip
Let $loc^\ell(\Sigma)$ denote the set of DEDs that we have defined in this section. From the encoding and Proposition~\ref{prop:query2um} we have 
\begin{lem}\label{lem:soundness}
$[\![loc^\ell(\Sigma)]\!]_{\mathscr{D},\mathscr{Q}}^{\mathcal{Q}}=O_\Sigma$.
\end{lem}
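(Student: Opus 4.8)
The plan is to reduce the identity $\db{loc^\ell(\Sigma)}^{\mathcal{Q}}_{\mathscr{D},\mathscr{Q}}=O_\Sigma$ to a per-instance entailment statement and then settle the two directions separately. By the definition of $\db{\cdot}$ it suffices to prove, for every $\mathscr{D}$-database $D$ and every $q\in\mathcal{Q}[\mathscr{Q}]$ with $const(q)\subseteq adom(D)$, that $D\cup loc^\ell(\Sigma)\vDash q$ iff $(D,q)\in O_\Sigma$. Writing $\Lambda=\{q':(D,q')\in O_\Sigma\}$, Proposition~\ref{prop:query2um} (together with the Claim inside its proof) gives $(D,q)\in O_\Sigma$ iff $I\models q$ for all $I\in\mathcal{U}(O_\Sigma,D)$ iff $\Lambda\vDash q$. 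So the lemma comes down to showing $D\cup loc^\ell(\Sigma)\vDash q\iff\Lambda\vDash q$. The engine behind both directions is the behaviour of the encoding: the successor/arithmetic DEDs generate, for each nonempty $A\subseteq adom(D)$, a partial successor relation $o_A$; the machine-simulation DEDs derive $\textit{Accept}(o_A,\langle q'\rangle)$ precisely when $\mathds{M}_\Sigma$ accepts $(D|_A,q')$, i.e.\ precisely when $(D|_A,q')\in O_\Sigma$; and the final DEDs propagate $\textit{Accept}$ to $\textit{True}$, nondeterministically select a CQ-disjunct of each true UCQ, and copy that disjunct into $\mathscr{Q}$ with the $adom(D)$-constants frozen and all variables replaced by fresh nulls.

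For the direction $\Lambda\vDash q\Rightarrow D\cup loc^\ell(\Sigma)\vDash q$ I would show that every model $I$ of $D\cup loc^\ell(\Sigma)$ satisfies $q$. Let $q'\in\Lambda$. By Lemma~\ref{lem:locality} $O_\Sigma$ is $\ell$-local, so there is a set $A$ with $|A|\le\ell(\dl q'\dl)$ and $(D|_A,q')\in O_\Sigma$; the corresponding $o_A$ is generated and the simulation forces $\textit{Accept}(o_A,\langle q'\rangle)$ in every model $I$ (this is where the convergence of $\mathds{M}_\Sigma$ enters). The $\textit{Accept}\to\textit{True}$ rule and the disjunctive $\textit{True}$/$\textit{Union}$ rule then force some CQ-disjunct $p$ of $q'$ to carry $\textit{True}$, and the copy rules place an isomorphic image of $[p]$ (constants fixed, variables sent to nulls) into $I|_{\mathscr{Q}}$, so $I\models p$ and hence $I\models q'$. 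As this holds for every $q'\in\Lambda$ and $\Lambda\vDash q$, we obtain $I\models q$. The essential use of locality here is exactly that a subset small enough to be named by a generated partial successor relation already witnesses membership.

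For the converse I would argue the contrapositive. If $\Lambda\not\vDash q$, the Claim in the proof of Proposition~\ref{prop:query2um} yields some $I_0\in\mathcal{U}(O_\Sigma,D)$, of the form $\bigoplus_{adom(D)}H$ for a minimum hitting set $H$ of $\Gamma(O_\Sigma,D)$, with $I_0\not\models q$. I would build a model $I$ of $D\cup loc^\ell(\Sigma)$ by chasing $D$, resolving the order-guessing disjunctions by the genuine linear order on $adom(D)$, and resolving each copy-choice disjunction for an accepted UCQ $q'$ by selecting the disjunct $p$ of $q'$ with $[p]\in H$ (possible since $H$ hits $\mathcal{D}_{q'}$, and legitimate because $\textit{Accept}(o_A,\langle q'\rangle)$ in $I$ forces $(D|_A,q')\in O_\Sigma$, whence $q'\in\Lambda$ by closure of $O_\Sigma$ under the injective homomorphism $D|_A\hookrightarrow D$). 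Then $I|_{\mathscr{Q}}$ is a union of copies of members of $H$ sharing only $adom(D)$, so sending each such copy to its matching copy inside $\bigoplus_{adom(D)}H$ is an $adom(D)$-homomorphism $I|_{\mathscr{Q}}\to I_0$. Since $q$ is a $\mathscr{Q}$-UCQ, hence preserved under homomorphisms fixing $const(q)\subseteq adom(D)$, and $I_0\not\models q$, we conclude $I|_{\mathscr{Q}}\not\models q$, so $I\not\models q$ and $D\cup loc^\ell(\Sigma)\not\vDash q$.

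I expect the main obstacle to be the correctness of the simulation layer: verifying that $\textit{Accept}(o_A,\langle q'\rangle)$ is derivable exactly when $(D|_A,q')\in O_\Sigma$. This rests on two delicate points, that the partial successor and arithmetic relations faithfully encode a computation over exactly the restricted active domain $A$ rather than over all of $adom(D)$, and that $\mathds{M}_\Sigma$ is a convergent $2$-bounded nondeterministic machine recognizing $O_\Sigma$ with the UCQ input format adapted from the CQ case of~\cite{ZhangZY16}. Once this correctness is in hand, the remaining steps—the $\textit{Accept}$-to-$\textit{True}$ propagation, the disjunct selection, and the copying—are routine chasing.
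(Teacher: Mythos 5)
Your proposal is correct and follows essentially the same route as the paper, whose own ``proof'' of this lemma is just a one-line appeal to the encoding together with Proposition~\ref{prop:query2um}: you reduce membership in $\db{loc^\ell(\Sigma)}^{\mathcal{Q}}_{\mathscr{D},\mathscr{Q}}$ to $\Lambda\vDash q$ via that proposition and then discharge the two directions by the intended behaviour of the partial-successor, simulation, and copy layers. Your elaboration (including the use of Lemma~\ref{lem:locality} to obtain a small witness set $A$ nameable by a generated partial successor relation, and the hitting-set-guided chase for the converse) is a faithful filling-in of exactly the argument the paper leaves implicit, and you correctly flag that the only remaining debt is the correctness of the machine-simulation layer, which the paper itself defers to the cited prior work.
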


Clearly, given the representation of any bound function $\ell$ and any finite set $\Sigma$ of DEDs, constructing $loc^\ell(\Sigma)$ is computable. Let $T_\mathsf{F}$ consist of $loc^\ell(\Sigma)$ for each finite set $\Sigma$ of DEDs and each bound function $\ell\in\mathsf{F}$. Since $\mathsf{F}$ is decidable, we know that $T_\mathsf{F}$ is also decidable. Let $\mathcal{L}_\mathsf{F}$ be the DED-language induced by $T_{\mathsf{F}}$. By Lemmas~\ref{lem:soundness},~\ref{lem:correctness} and~\ref{lem:locality}, $\mathcal{L}_\mathsf{F}$ must be universal for the family of $\mathsf{F}$-local OMQA-languages.

\section{Concluding Remarks}

We have established the nonexistence of universal language for both FO-rewritable and PTIME-tractable OMQA. As a rescue, we also proposed a novel property, called the locality, as an approximation to the FO-rewritability, and proved that there is some language of DEDs which is universal for OMQA with bounded locality. In spite of the unnaturalness of the constructed language, we believe that the proposed property would shed light on finding natural universal languages, as well as on identifying new tractable languages.

%

\bibliographystyle{aaai}
\bibliography{ref}

\end{document}